\definecolor{light-gray}{gray}{0.95}
\definecolor{flodarkpurple}{rgb}{0.288,0.1196,0.7}
\definecolor{RedOrange}{rgb}{0.9, 0.3, 0.0}
\newcommand{\rcl}[1]{\cellcolor{red!13}#1}
\newcommand{\vcl}[1]{\cellcolor{gray!15}#1}
\newcommand{\gcl}[1]{\cellcolor{green!13}#1}
\newcommand{\bcl}[1]{\cellcolor{blue!13}#1}
\newcommand{\kcl}[1]{\cellcolor{violet!13}#1}
\newcommand{\rtext}[1]{\textcolor{red!60}{#1}}
\newcommand{\vtext}[1]{\textcolor{violet!60}{#1}}
\newcommand{\gtext}[1]{\textcolor{ForestGreen!80!black}{#1}}
\newcommand{\btext}[1]{\textcolor{blue!60}{#1}}
\newcommand{\ktext}[1]{\textcolor{gray!90!black}{#1}}
\newcommand{\otext}[1]{\textcolor{RedOrange!100}{#1}}
\newcommand{\calU}{\mathcal{U}}
\newcommand{\calN}{\mathcal{N}}
\newcommand{\calD}{\mathcal{D}}
\newcommand{\calC}{\mathcal{C}}
\newcommand{\calA}{\mathcal{A}}
\newcommand{\E}{\mathbb{E}}
\newcommand{\R}{\mathbb{R}}
\newcommand{\prob}{\mathbb{P}}
\newcommand{\iid}{\overset{\textup{iid}}{\sim}}
\newtheorem{theorem}{Theorem}
\newtheorem{proposition}{Proposition}
\newcommand{\edit}[1]{{\color{black}{#1}}}
\newcommand{\STAB}[1]{\begin{tabular}{@{}c@{}}#1\end{tabular}}
\newcommand{\authorhref}[3][flodarkpurple]{\href{#2}{\textcolor{#1}{#3}}}
\def\blfootnote{\xdef\@thefnmark{}\@footnotetext}
\title{Unpacking Failure Modes of Generative Policies: Runtime Monitoring of Consistency and Progress\vspace{-6pt}}
\author{%
    \normalfont
    \authorhref{http://agiachris.github.io/}{Christopher Agia}\textsuperscript{1},\,
    \authorhref{https://rohansinha.nl/}{Rohan Sinha}\textsuperscript{1},\,
    \authorhref{https://yjy0625.github.io/}{Jingyun Yang}\textsuperscript{1},\,
    \authorhref{https://github.com/Zi-ang-Cao}{Zi-ang Cao}\textsuperscript{1},\\
    \authorhref{https://contactrika.github.io/}{Rika Antonova}\textsuperscript{1},\,
    \authorhref{https://profiles.stanford.edu/marco-pavone}{Marco Pavone}\textsuperscript{1,2},\,
    \authorhref{https://web.stanford.edu/~bohg/}{Jeannette Bohg}\textsuperscript{1}\\\vspace{-6pt}\\
    \textsuperscript{1}\href{https://www.stanford.edu/}{Stanford University}, \textsuperscript{2}\href{https://www.nvidia.com/en-us/research/}{NVIDIA Research}\vspace{-38pt}
}
\begin{document}
\maketitle

\begin{figure}[h]
    \includegraphics[width=\linewidth]{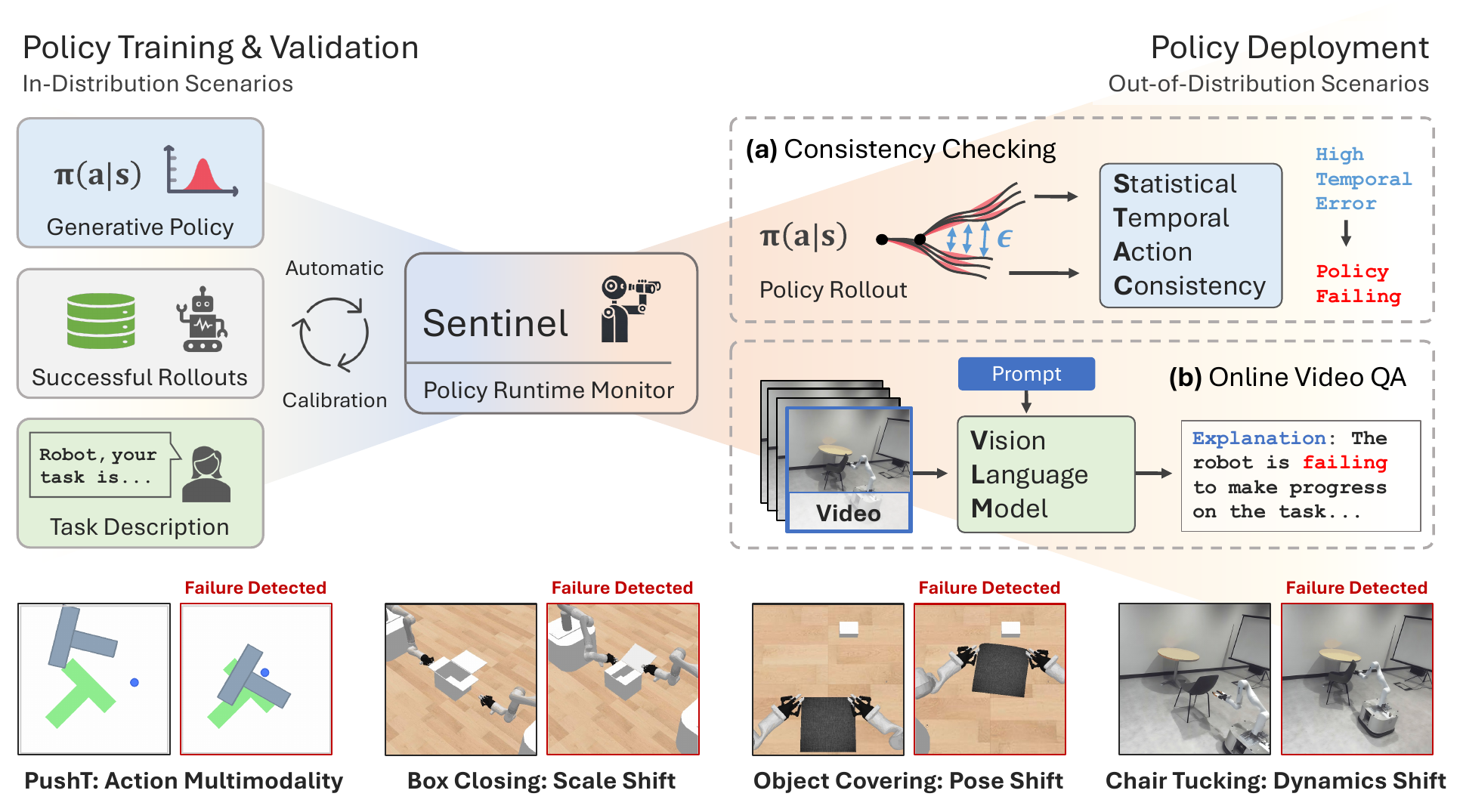}
    \vspace{-18pt}
    \caption{\small
        We present \textbf{Sentinel}, a runtime monitor that detects unknown failures of generative robot policies at deployment time.
        Constructing Sentinel requires only a set of successful policy rollouts and a description of the task, from which it detects diverse failures by monitoring \textbf{(a)} the temporal consistency of action-chunk distributions generated by the policy and \textbf{(b)} the task progress of the robot(s) through video QA with Vision Language Models.
    }
    \label{fig:teaser}
\end{figure}

\vspace{-8pt}

\begin{abstract}
    Robot behavior policies trained via imitation learning are prone to failure under conditions that deviate from their training data. Thus, algorithms that monitor learned policies at test time and provide early warnings of failure are necessary to facilitate scalable deployment. We propose \textbf{Sentinel}, a runtime monitoring framework that splits the detection of failures into two complementary categories: 1) Erratic failures, which we detect using statistical measures of temporal action consistency, and 2) task progression failures, where we use Vision Language Models (VLMs) to detect when the policy confidently and consistently takes actions that do not solve the task. Our approach has two key strengths. 
    First, because learned policies exhibit diverse failure modes, combining complementary detectors leads to significantly higher accuracy at failure detection. 
    Second, using a statistical temporal action consistency measure ensures that we quickly detect when multimodal, generative policies exhibit erratic behavior at negligible computational cost. In contrast, we only use VLMs to detect failure modes that are less time-sensitive. 
    We demonstrate our approach in the context of diffusion policies trained on robotic mobile manipulation domains in both simulation and the real world.
    By unifying temporal consistency detection and VLM runtime monitoring, Sentinel detects 18\% more failures than using either of the two detectors alone and significantly outperforms baselines, thus highlighting the importance of assigning specialized detectors to complementary categories of failure.
    \textbf{Qualitative results} are made available at \authorhref{https://sites.google.com/stanford.edu/sentinel}{sites.google.com/stanford.edu/sentinel}.
    \blfootnote{Correspondence to Christopher Agia $<$\href{mailto:cagia@cs.stanford.edu}{cagia@cs.stanford.edu}$>$}
\end{abstract}

\keywords{Failure Detection, Generative Policies, Vision Language Models}

\section{Introduction}\label{sec:intro}

Imitation learning represents one of the simplest yet most effective ways of learning robot control behaviors from data.
Herein, generative modeling techniques have enabled robot policies to learn from highly heterogeneous, multimodal demonstration data collected by humans~\cite{Chi-RSS-23,octo_2023,khazatsky2024droid}, showing early signs of generalization to novel environments~\cite{rt22023arxiv} and embodiments~\cite{open_x_embodiment_rt_x_2023}.
When deploying robots beyond controlled lab settings, however, even the most powerful generative policies will eventually encounter \textit{out-of-distribution} (OOD) test cases---scenarios that differ from the training data---on which their behavior becomes unpredictable~\cite{argall2009survey}.
In response,
we will require methods that monitor the behavior of learned, generative polices at deployment time to detect whether they are failing as a result of distribution shift.

Identifying when a learned model behaves unreliably is typically framed as an OOD detection problem, for which a taxonomy of methods exist~\cite{sinha2022system,yang2021generalized}.
While these methods can signal distribution shift~\cite{richter2017safe,RuffVandermeulenEtAl2018} or quantify uncertainty~\cite{GalZoubin2016, LakshminarayananPritzelEtAll2017} \textit{w.r.t.} individual input-output samples, they do not fully characterize closed-loop policy failures that arise from multiple, time-correlated prediction errors along a trajectory rollout.
Action multimodality further complicates the failure detection problem: that is, actions sampled from multimodal generative policies can vary greatly from one timestep to the next, leading to complex runtime behaviors and, by extension, diverse failure modes compared to previous model-free policies~\cite{pmlr-v164-mandlekar22a,zhang2018deep}. 
Therefore, the special case of generative robot policies necessitates the design of new failure detectors suited to their multimodal characteristics and closed-loop operational nature in deployment.

In this work, we present \textbf{Sentinel}, a runtime monitoring framework that splits the task of detecting generative policy failures into two complementary categories.
The first is the detection of failures in which the policy exhibits erratic behavior as characterized by its temporal inconsistency.
For example, the robot may collide with its surroundings if the policy's action distributions contain conflicting action modes across time.
To detect erratic failures, we propose to evaluate \textit{how much} a generative policy's action distributions are changing across time using \underline{S}tatistical measures of \underline{T}emporal \underline{A}ction \underline{C}onsistency (STAC).
The second category is the detection of failures in which the policy is temporally consistent but struggles to make progress on its task.
For example, the robot can stall in place or drift astray if the policy produces constant outputs.
We propose to detect task progression failures (undetectable by STAC) zero-shot with Vision Language Models (VLMs), which can distinguish off-nominal behavior when prompted to reason about the robot's progress in a video question answering setup. 
Notably, one would want to catch erratic failures (the first category) fast, whereas task progression failures (the second category) do not require immediate intervention.

Our contributions are three-fold: 1) A formulation of failure detection for generative policies that splits failures into two complementary categories, thus admitting the use of specialized detectors toward system-level performance increases (i.e., a \textit{divide-and-conquer} strategy); 
2) We propose STAC, a novel temporal consistency detector that tracks the statistical consistency of a generative policy's action distributions to detect erratic failures; %
3) We propose the use of VLMs to monitor the task progress of a policy over the duration of its rollout, and we offer practical insights for their use as failure detectors.
Provided with only a set of successful policy rollouts and a natural language description of the task, \textbf{Sentinel} (which runs STAC and the VLM monitor in parallel) detects over 97\% of unknown failures exhibited by diffusion policies~\cite{Chi-RSS-23} across simulated and real-world robotic mobile manipulation domains.

\section{Related Work}\label{sec:related-work}	

Advances in \textbf{robot imitation learning} include new policy architectures~\cite{Chi-RSS-23,yang2023equivact,bharadhwaj2023roboagent,goyal2023rvt}, hardware innovations for data collection~\cite{Zhao-RSS-23,chi2024universal,fu2024mobile}, community-wide efforts to scale robot learning datasets~\cite{khazatsky2024droid,open_x_embodiment_rt_x_2023,walke2023bridgedata}, and training high-capacity behavior policies on these datasets~\cite{octo_2023,rt22023arxiv,rt12022arxiv,kim2024openvla}.
Of recent interest is the use of generative models~\cite{lecun2006tutorial,vaswani2017attention,ho2020denoising,song2020score} to effectively learn from heterogeneous and multimodal datasets of human demonstrations~\cite{Chi-RSS-23,open_x_embodiment_rt_x_2023,octo_2023,kim2024openvla}.
\textit{Generative policies} thereby learn to represent highly multimodal distributions from which diverse robot actions can be sampled.
While state-of-the-art generative policies demonstrate remarkable performance, their inherent multimodality results in more stochastic runtime behavior than that of previous model-free policies~\cite{pmlr-v164-mandlekar22a,zhang2018deep,rahmatizadeh2018vision,florence2019self,haarnoja2018soft,nair2020awac}.
In this work, we focus on characterizing the behavior of generative robot policies for failure detection.

Despite recent progress, it is well known that \textbf{learned policies may fail} beyond their training distribution~\cite{argall2009survey,sinha2022system,yang2021generalized}, in part due to compounding prediction errors on states induced by the policy~\cite{ross2010efficient,ross2011reduction}.
As such, a recent work proposes to bound the performance of imitation learned policies prior to deployment~\cite{vincent2024generalizable}.
Other works propose to retrain the policy on OOD states using corrective supervision from humans~\cite{bajcsy2018learning,kelly2019hg,mandlekar2020human,pmlr-v164-hoque22a,cui2023no}.
Notably, these methods apply \textit{after} failures have occurred, maintaining the need for runtime monitors that detect policy failures and prevent their downstream consequences.
Thus, our focus can be viewed as complementary to methods that learn post hoc from corrective feedback.

The existing literature on \textbf{out-of-distribution detectors} and \textbf{runtime monitoring} for learned models is highly diverse, spanning multiple categories of methods. Model-based methods (e.g., \cite{MarcoMorleyEtAl2023, LiuDassEtAl2024}) are not directly applicable to the model-free policies we consider.
Some methods only pursue failure modes that are known \textit{a priori}~\cite{FaridSnyderEtAl2022, DaftryZengEtAl2016, RabieeBiswas2019, finonet}, \edit{whereas we seek to detect unknown failures at deployment time.}
Many OOD detection works detect dissimilarity from training data via reconstruction~\cite{richter2017safe, RuffKauffmanEtAl2021} or embedding similarity~\cite{LeeLeeEtAl2018, RuffVandermeulenEtAl2018}, however, observational differences may not always result in policy failure.
Other methods directly quantify epistemic uncertainty~\cite{GalZoubin2016,LakshminarayananPritzelEtAll2017,LiuLinEtAl2020, SharmaAzizanEtAl2021}, but come with considerable computational expense or may not be suitable for autoregressive, generative policy architectures, e.g., diffusion policies~\cite{Chi-RSS-23}.
Several works monitor symbolic states to detect manipulation failures~\cite{migimatsu2022symbolic, hegemann2022learning}, but assume access to multiple sensor modalities (e.g., vision, haptic, proprioception) for symbolic state estimation.
Most related to our approach are algorithms that perform consistency checks across sensor modalities~\cite{lee2021detect} and time~\cite{AntonanteSpivakEtAl2021}.
Different from these, we directly monitor the consistency of a learned policy's action distributions and its task progress to detect closed-loop failure.

There is a growing interest in the use of \textbf{Foundation Models}~\cite{bommasani2021opportunities} toward increasing robustness in robotic systems. 
Large Language Models are used to detect anomalies~\cite{elhafsi2023semantic, SinhaElhafsiEtAl2024} and to replan under execution failures~\cite{SinhaElhafsiEtAl2024,pmlr-v205-huang23c,pmlr-v229-liu23g,skreta2024replan}. 
Reward models in the form of visual representations~\cite{ma2023vip,cui2022can} or VLMs~\cite{yang2023robot} could be repurposed for failure detection by thresholding predicted rewards.
However, \cite{yang2023robot} shows that additional fine-tuning is required to obtain reliable reward estimates.
\citet{pmlr-v232-du23b} fine-tunes a VLM for episode-level success classification using a human annotated dataset on the order of $10^5$ trajectories.
In contrast to this work, we 1) focus on zero-shot assessment with VLMs, 2) seek to detect policy failure amidst task execution, and 3) consider the system-level role of VLMs operating alongside policy-level failure detectors, and as such, assign each detector to a specified category of failure.

\section{Problem Setup}\label{sec:problem-setup}

\paragraph{Failure Detection}
The goal of this work is to detect when a generative robot policy $\pi(a | s)$ fails to complete its task.
The policy operates within a Markov Decision Process (MDP) with a finite horizon $H$, but it may terminate upon completion of the task at an earlier timestep.
Given an initial state $s_0$ representative of a new test scenario, executing the policy for $t$ timesteps produces a trajectory $\tau_t = (s_0, a_0, \ldots, s_t)$.
We define \textbf{failure detection} as the task of detecting whether a trajectory rollout $\tau_H$ constitutes a failure at the earliest possible timestep $t$.
To do so, we aim to construct a failure detector $f(\tau_t) \rightarrow \{\texttt{ok}, \texttt{failure}\}$ that, at each timestep $t$, can provide a classification as to whether the policy \textit{will fail} if it continues executing for the remaining $H-t$ timesteps of the MDP.
Note that the failure detector makes its assessment solely based on the history of observed states and sampled actions up to the current timestep $t$.

The failure detector may contain parameters that require calibration, such as a detection threshold (as in \cref{sec:temporal-consistency}).
Therefore, we assume a scenario in which the policy $\pi$ is first trained, then validated on test cases where it is expected to perform reliably.
This validation process yields a small dataset of $M$ successful trajectories $\mathcal{D}_\tau = \{\tau^i\}_{i=1}^M$ that can be used to calibrate the failure detector $f$ (if it contains parameters).
Intuitively, the dataset $\mathcal{D}_\tau$ characterizes nominal policy behavior within or near the distribution of states it has been trained on, which helps to ground the assessment of potentially OOD trajectories at test time. 

We measure failure detection performance in terms of true positive rate (TPR), true negative rate (TNR), and detection time (DT). We count a true positive if the failure detector raises a warning at any timestep in a trajectory where the policy fails, the earliest of which counts as the detection time. 
We count a true negative if the failure detector never raises a warning in a trajectory where the policy succeeds. 
We refer to \cref{appx:evaluation-protocol} for supporting definitions of policy failure and key performance metrics.

\paragraph{Policy Formulation}
\begin{wrapfigure}{rt}{0.32\textwidth}
  \centering
  \vspace{-3pt}
  \includegraphics[width=1.0\textwidth]{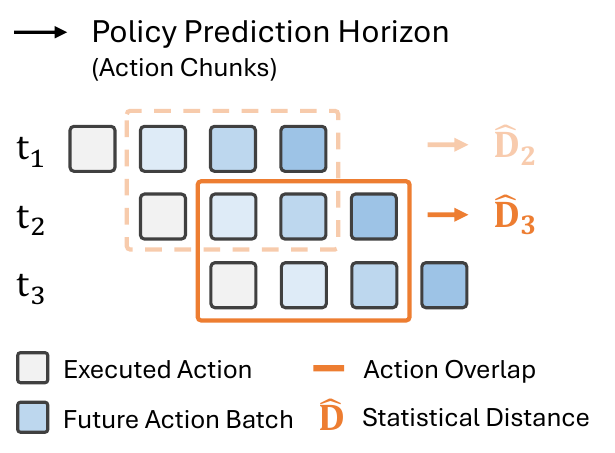}
  \vspace{-14pt}
  \caption{\small 
    Action sequence prediction overlap during policy rollout.
    }
  \label{fig:action-overlap}
\end{wrapfigure}

We consider the setting where the policy $\pi$ is stochastic and predicts a sequence of actions (also referred to as an \textit{action chunk}~\cite{Zhao-RSS-23}) for the next $h$ timesteps.
That is, the action sequence sampled at the $t$-th timestep, $a_t \sim \pi(\cdot | s_t)$, consists of $h$ actions $a_t := (a_{t|t}, a_{t+1|t}, \ldots, a_{t+h-1|t})$, where the notation $a_{t+i|t}$ denotes the action prediction for time $t+i$ generated at timestep $t$ (as in~\cite{BorrelliBemporadMorari2017}).
The actions $a_{\cdot|t} \in \mathcal{A}$ may correspond to e.g., end-effector poses or velocities. 
To control the robot, we sample an action sequence and execute the first $k<h$ actions, $a_{t:t+k|t}$, after which we re-evaluate the policy at timestep $t+k$. 
We visualize this receding horizon rollout for $k=1$ in \cref{fig:action-overlap}.
Notably, $a_t$ and $a_{t+k}$ contain actions that temporally overlap for $h-k$ timesteps (i.e., at $a_{t+k:t+h-1|t}$ and $a_{t+k:t+h-1|t+k}$).

Several recently proposed policy architectures achieve state-of-the-art performance by sampling action sequences using generative models~\cite{Chi-RSS-23,Zhao-RSS-23,octo_2023}, to which our approach is generically applicable. However, in this paper, we specify the failure detection problem for diffusion policies (DP)~\cite{Chi-RSS-23}, which 
a) are stable to train and b) address action multimodality by representing the policy distribution with a denoising diffusion probabilistic model (DDPM)~\cite{ho2020denoising}. 
We note that the computationally intensive, iterative nature of the denoising process makes it challenging to directly apply existing OOD detection methodologies (e.g., \cite{sharma2021sketching}) to diffusion policies for failure detection, thus motivating several of our design decisions.
Further details on the training and properties of these models are provided in \cref{appx:diffusion-policy}.

\begin{figure}
    \includegraphics[width=0.95\linewidth]{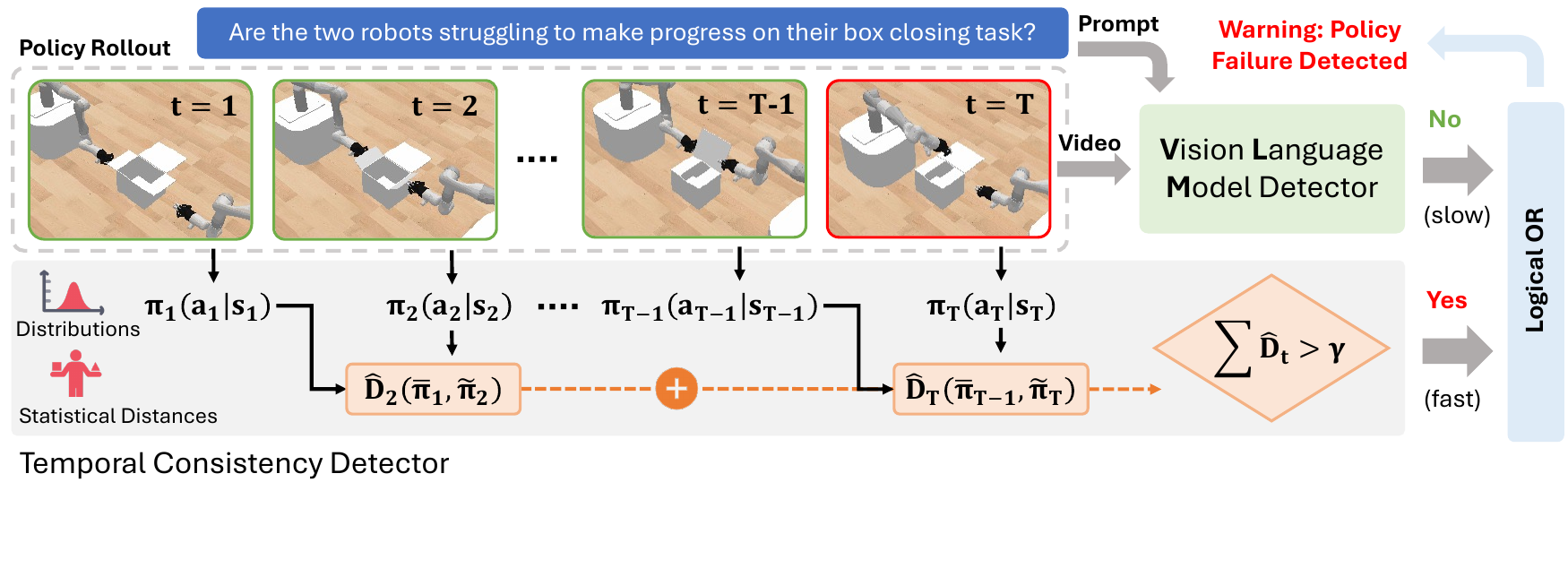}
    \vspace{-24pt}
    \caption{\small
        \textbf{Overview of Sentinel.} 
        The images depict a policy rollout for timesteps $t=1,\ldots, T$. Temporal Consistency Detector: At each timestep $t$, the state $s_t$ is passed to the generative policy to obtain action distributions $\pi_t$ between which statistical distances $\hat{D}_t$ are computed to measure temporal consistency. The statistical distances are summed up to the current timestep $T$ (as in Eq.~\ref{eq:cum-score-fn}) and thresholded by $\gamma$ to detect policy failure. Vision Language Model (VLM) Detector: The VLM classifies whether the policy is failing to make progress on its task given a video up to timestep $T$ and a description of the task. Execution stops if either detector raises a warning.
    }
    \label{fig:full-system}
\end{figure}

\section{Proposed Approach: Sentinel}\label{sec:approach}
The failure behavior of a generative policy by OOD conditions can be highly diverse, and we therefore argue that the desiderata for a failure detector may vary between qualitative types of failures.
In this work, we propose to split the failure detection task into two complementary failure categories.

The first is the detection of failures resulting from erratic policy behavior, which may cause a robot to end up in states that are difficult or costly to reset from, knock over objects, or lead to safety hazards. Therefore, it is important to detect erratic behavior as quickly as possible (\cref{sec:temporal-consistency}).
The second category is the detection of failures in which the policy struggles to make progress on its task (hereafter referred to as \textit{task progression failures}) but does so in a temporally consistent manner. 
For example, the policy may confidently place an object in the wrong location. 
Here, we must observe the robot over a longer period of time to identify that the policy is not making progress towards task completion (\cref{sec:vlm-assessment}).

The \textbf{key insight} of our approach is that by defining one failure category as the complement of the other, it becomes trivial to combine failure detectors to form an accurate overall detection pipeline whilst satisfying the requirements of each failure category. 
Our full pipeline, \textbf{Sentinel}, is shown in \cref{fig:full-system}.

\subsection{STAC: Detecting Erratic Failures with Temporal Consistency}\label{sec:temporal-consistency}

When a policy operates in nominal, in-distribution settings, it should complete its task in a temporally consistent manner. 
For example, a policy may plan to avoid an obstacle on the right or on the left, but not jitter between the two options.
Moreover, as noted in \cite{Chi-RSS-23}, training a diffusion policy that predicts action sequences rather than individual actions encourages temporal consistency between action predictions.

Therefore, we propose to construct a quantifiable measure of temporal action consistency to detect whether the policy is behaving erratically, and hence, is likely to fail at the task. 
However, the multimodal distributional nature of DPs makes it difficult to directly compare two sampled actions $a_t \sim \pi(a_t | s_t)$ and $a_{t+k} \sim \pi(a_{t+k} | s_{t+k})$, e.g., throughout execution.
This is because the actions may differ substantially along their prediction horizon when the policy commits or switches between different action modes, or simply due to randomness in sampling. 
Instead, we quantify erratic policy behavior with \underline{s}tatistical measures of \underline{t}emporal \underline{a}ction \underline{c}onsistency (\textbf{STAC}, which we term our approach).

Let $\bar\pi_t := \pi(a_{t+k:t+h-1|t} | s_t)$ and $\tilde{\pi}_{t+k} := \pi(a_{t+k:t+h-1|t+k} | s_{t+k})$ be the marginal action distributions of the temporally overlapping actions between timesteps $t$ and $t+k$. 
We compute the temporal consistency between two contiguous timesteps $t$ and $t+k$ as $\hat{D}(\bar\pi_t, \tilde\pi_{t+k}) \geq 0$, where $\hat{D}$ denotes the chosen statistical distance function (e.g., maximum mean discrepancy, KL-divergence)\footnote{Due to the iterative denoising procedure of the diffusion policy, analytically computing a distance $D(\bar\pi_t, \tilde\pi_{t+k})$ is challenging, as evaluating the densities of $\bar\pi_t$ and $\tilde\pi_{t+k}$ requires marginalizing out the intermediate diffusion steps and the non-overlapping actions. 
Instead, we approximate $D$ with its empirical counterpart $\hat{D}$ by sampling a batch of action sequences (parallelized on a GPU) at each timestep $t$ and $t+k$ rather than a single action sequence.}.
In addition, we propose to take the cumulative sum of statistical distances along a trajectory as a measure of the overall temporal consistency in a policy rollout. At each policy-inference timestep $t = jk$ with $j\in \{0,1,\dots\}$, we compute the temporal consistency score as
\begin{align}\label{eq:cum-score-fn}
    \eta_t := \sum_{i=0}^{j-1} \hat{D}(\bar\pi_{ik}, \tilde\pi_{(i+1)k}).
\end{align}
Computing the consistency score in a cumulative manner has two advantages over thresholding the distance at each timestep individually. Firstly, it allows us to detect cases where the temporal consistency metric $\hat D$ is marginally larger than usual throughout the episode (e.g., jitter). Secondly, it allows us to detect instances where the policy is temporally inconsistent more often than in nominal scenarios.

\begin{wrapfigure}{rt}{0.31\textwidth}
  \centering
  \vspace{-15pt}
  \includegraphics[width=1.0\textwidth]{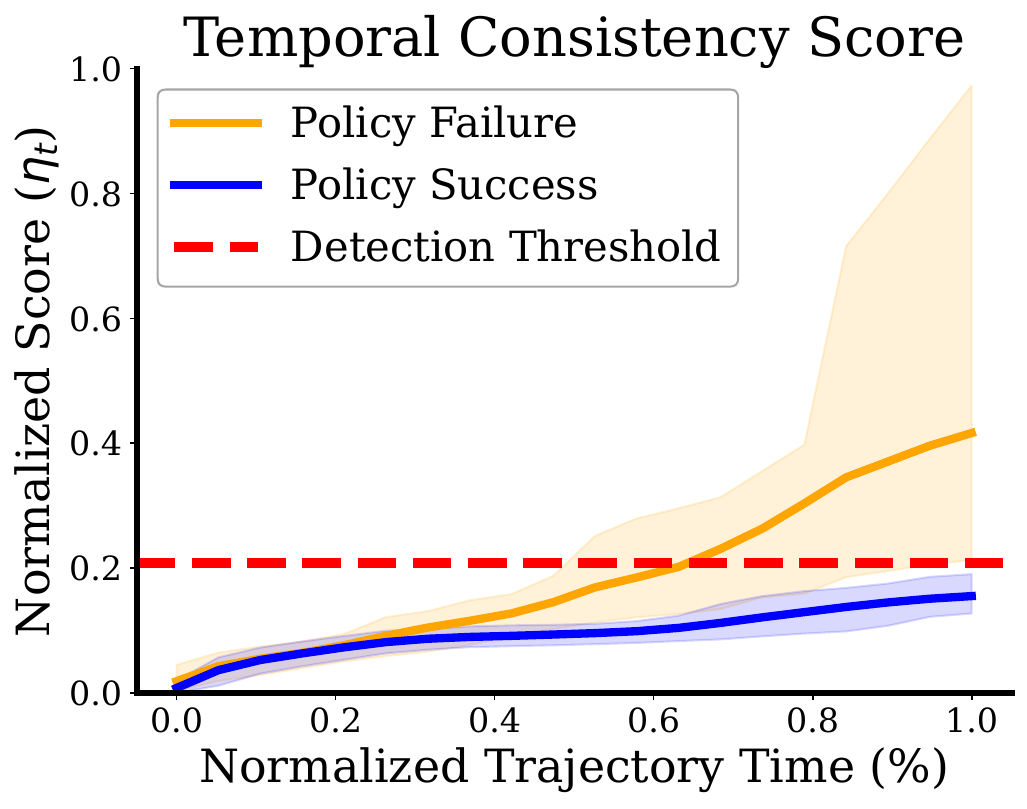}
  \caption{\small
  Temporal consistency scores grow faster when the policy fails. Error bars indicate the 5-th and 95-th score quantiles.\vspace{-10pt}}
  \label{fig:error-result}
  \vspace{-5pt}
\end{wrapfigure}

At runtime, we raise a failure warning at the moment that $\eta_t$ exceeds a failure detection threshold $\gamma$, which we calibrate offline using the validation dataset of successful trajectories $\calD_\tau$. 
Here, we first compute the cumulative temporal consistency scores throughout the entirety of the lengths $H_i \leq H$ of trajectories in $\calD_\tau$, yielding $\{\eta^i_{H_i}\}_{i=1}^M$.
Then, we set the threshold $\gamma$ to the $1 - \delta$ quantile of $\{\eta^i_{H_i}\}_{i=1}^M$, where $\delta \in (0,1)$ is a hyperparameter. 
Intuitively, this ensures that the false positive rate (FPR)---the probability that we raise a false alarm and terminate on any trajectory that is i.i.d. with respect to $\calD_\tau$---remains low, such that any warnings are likely failures. 
We can formalize this intuition using recent results from conformal prediction~\cite{angelopoulos2021gentle, LuoZhaoSample2023}:

\begin{proposition}[STAC has low FPR]\label{cor:stac}
    Let $P_\tau$ denote the distribution of success trajectories in the validation dataset $\calD_\tau = \{\tau^i\}_{i=1}^M \iid P_\tau$.
    Then, the \emph{FPR}---the probability of raising a false alarm at any point during an i.i.d. test trajectory $\tau \sim P_\tau$ of length $H' \leq H$---is at most $\delta$:
    \begin{equation*}
        \mathrm{FPR} := \prob_{P_\tau} \big(\exists \ 0 \leq t \leq H' \ \mathrm{s.t.} \ \eta_t > \gamma \big) \leq \delta.
    \end{equation*}
\end{proposition}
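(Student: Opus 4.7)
The plan is to reduce the event in question to a single terminal-time event using monotonicity, and then invoke a standard conformal/quantile argument.

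First I would exploit the structure of the consistency score. Since $\eta_t$ is defined in Eq.~\ref{eq:cum-score-fn} as a cumulative sum of non-negative statistical distances $\hat D(\bar\pi_{ik},\tilde\pi_{(i+1)k}) \geq 0$, the sequence $t \mapsto \eta_t$ is monotonically non-decreasing along any rollout. Consequently, for any trajectory of length $H'$, $\sup_{0 \le t \le H'} \eta_t = \eta_{H'}$, and the event $\{\exists\,t \le H' : \eta_t > \gamma\}$ is exactly the terminal event $\{\eta_{H'} > \gamma\}$. This collapses the "any-time" false-alarm event to a single random variable, which is what makes a one-shot quantile calibration sufficient.

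Next I would apply an exchangeability argument. The validation trajectories $\tau^1,\dots,\tau^M$ are i.i.d.~from $P_\tau$, and by hypothesis the test trajectory $\tau$ is also i.i.d.~from $P_\tau$, so the $M+1$ terminal scores $\eta^1_{H_1},\dots,\eta^M_{H_M},\eta_{H'}(\tau)$ are exchangeable. The threshold $\gamma$ is set to the $\lceil (1-\delta)(M+1)\rceil$-th order statistic of the validation scores (the standard conformal choice for a $1-\delta$ quantile). A classical conformal-prediction result (see, e.g., \cite{angelopoulos2021gentle, LuoZhaoSample2023}) then yields
\begin{equation*}
\prob_{P_\tau}\bigl(\eta_{H'}(\tau) > \gamma\bigr) \leq \delta.
\end{equation*}
Combining with the monotonicity reduction gives the claimed bound on the FPR.

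The main subtlety, rather than an outright obstacle, is making the quantile step fully rigorous. Two points warrant care: (i) handling possible ties in the empirical distribution of $\{\eta^i_{H_i}\}$, which is standard --- one either assumes the distribution of $\eta_{H_i}$ is continuous or uses the conservative tie-breaking version of the conformal quantile so that the bound $\delta$ is still attained; and (ii) the fact that validation trajectories have data-dependent lengths $H_i \le H$ and the test trajectory has its own length $H' \le H$. Because $\eta$ is monotonic and we evaluate each trajectory at its own terminal time, exchangeability of the whole trajectories $\tau^i, \tau$ transfers to exchangeability of the scalar scores $\eta^i_{H_i}, \eta_{H'}$, so variable lengths do not break the argument. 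Once these bookkeeping items are addressed, the bound follows immediately from the quantile lemma.
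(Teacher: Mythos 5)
Your proposal is correct and follows essentially the same route as the paper's own proof: reduce the any-time alarm event to the terminal event $\{\eta_{H'} > \gamma\}$ via monotonicity of the cumulative score, then apply the standard split-conformal quantile guarantee to the exchangeable terminal scores. The extra care you take with ties and variable trajectory lengths is sound and slightly more explicit than the paper's treatment, but does not change the argument.
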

We refer to \cref{appx:stac} for additional details on STAC and \cref{appx:derivations} for a full statement and proof of \cref{cor:stac}.

\subsection{Detecting Task Progression Failures with VLMs}\label{sec:vlm-assessment}
A policy operating in out-of-distribution settings may not always fail by exhibiting erratic behavior that we can detect with STAC (\cref{sec:temporal-consistency}). 
For example, suppose the policy confidently commits to the wrong plan or produces approximately constant outputs. 
Detecting such failures requires an understanding as to whether or not the policy is progressing on its task, which necessitates a more comprehensive analysis of the robot's behavior within the context of its task specification. 
Therefore, we propose to use VLMs to monitor the task progress of the policy by providing them with the robot's image observations up to the current timestep as a video. We do so because recent work has shown that high-capacity VLMs possess robotics relevant knowledge and contextual reasoning abilities~\cite{nasiriany2024pivot,gao2023physically,pmlr-v232-du23b,li2024visionlanguage,yang2023robot}.

We formulate the detection of task progression failures as a chain-of-thought (CoT)~\cite{wei2022chain}, video question answering (QA) task~\cite{antol2015vqa,xu2016msr}, reflecting current best practices in prompting.
To capture a notion of \textit{task progress}, the VLM must reason across time and in the context of the policy's task.
Thus, we construct a prompt that contains a description of the task and the VLM's role as a runtime monitor.
We query the VLM online using the text prompt and the history of observed images (i.e., a video) $I_{0:t} := (I_0, I_{\nu k}, I_{2\nu k}, \ldots, I_t)$ up to the current timestep $t$, where $\nu$ determines the frequency of the images relative to the execution horizon $k$ of the DP (\cref{sec:problem-setup}). 
Differentiating between partial progress and task failure can be ambiguous for a slow moving robot, and thus, we also specify the current elapsed time $t$ and the time limit for the task $H$. 
This enables the VLM to gauge whether the rate at which the robot is executing will result in a timely task completion.
After performing a CoT analysis, the VLM concludes with a classification in $\{\texttt{ok}, \ \texttt{failure}\}$.
For additional details on the VLM and prompt, please see \cref{appx:vlm}.

At the time of writing, cloud-querying a state-of-the-art VLM for video QA incurs significant latency (e.g., \texttt{GPT-4o}'s mean response time was $14.0\mathrm{s}$). However, we emphasize that VLM inference latency is a lesser concern for detecting task progression failures because they are likely to occur at longer timescales and do not require urgent intervention. 
In contrast, we assign the rapid detection of erratic failures to STAC (\cref{sec:temporal-consistency}).
Notably, the fast and slow detection requirements of our complementary failure categories mean that STAC and the VLM can operate at different timescales, offering potential benefits such as reduced costs and a lower likelihood of false positives when they run in parallel (\cref{fig:full-system}).

\section{Experiments}\label{sec:experiments}

We conduct a series of experiments to test our failure detection framework. 
These experiments take place in both simulation and the real world (\cref{fig:teaser}), and host an extensive list of baselines.
We refer to \cref{appx:experiments} for a detailed description of our environments, hardware setup, baselines, and evaluation protocol.

\textbf{Environments.} 
We include the \textbf{PushT} domain from \cite{Chi-RSS-23} to evaluate the detection of failures under action multimodality.
The \textbf{Close Box} and \textbf{Cover Object} domains involve two mobile manipulators and thus present the challenge of a high-dimensional, 14 degree-of-freedom action space.
We additionally conduct hardware experiments with a mobile manipulator for a nonprehensile \textbf{Push Chair} task.
This task presents greater dynamic complexity than the simulation domains~\cite{ruggiero2018nonprehensile}.
At test time, we generate OOD scenarios by randomizing a) the scale and dimensions of objects in \textbf{PushT} and \textbf{Close Box} and b) the pose of the object in \textbf{Cover Object} and \textbf{Push Chair} beyond the policy's demonstration data.

\textbf{Baselines.} 
We evaluate Sentinel (i.e., both STAC and the VLM) against baselines representative of multiple methodological categories in the OOD detection literature~\cite{sinha2022system}. 
Intuitively, these categories represent different formulations of the failure detector's score function, responsible for computing the per-timestep scores that are then summed to compute the trajectory score as in \cref{eq:cum-score-fn}.
We consider score functions based on the embedding similarity of observed states \textit{w.r.t.} $\mathcal{D}_\tau$ \cite{LeeLeeEtAl2018}, the reconstruction error of actions sampled from the DP~\cite{graham2023denoising}, and the output variance of the DP. 
To strengthen the comparison, we introduce a new baseline that uses the DDPM loss (\cref{eq:ddpm-loss}) on re-noised actions sampled from the DP as the failure detector's score function.
Where applicable, we implement temporal consistency variants of these baselines to ablate the design of STAC. 
Further details on these baselines are provided in \cref{appx:baselines}.

\textbf{Evaluation Protocol.} 
We train a DP for each environment and use standard settings for the DP's prediction and execution horizon~\cite{Chi-RSS-23}. 
We use the same calibration and evaluation protocol across all failure detection methods.
That is, we calibrate detection thresholds to the 95-th quantile of scores in a dataset $\mathcal{D}_\tau = \{\tau^i\}_{i=1}^M$ of $M = 50$ in-distribution rollouts for each simulated task and $M = 10$ in-distribution rollouts for the real-world task.
Finally, we report standard detection metrics including TPR, TNR, Mean Detection Time, Accuracy, and Balanced Accuracy, following the definitions in \cref{sec:problem-setup}.

\begin{table*}[t]
    \centering
    \caption{\small
        \textbf{Detecting erratic policy failures in the Close Box domain.} Results are averaged over 3 random seeds. Our temporal consistency detector, STAC, accounts for when a policy fails (\btext{high true positive rate}) and when it generalizes to out-of-distribution test cases (\btext{high true negative rate}), in contrast to embedding-based methods that associate state atypicality with policy failure (\rtext{low true negative rate}). Select baselines that \ktext{accurately detect} erratic policy failures in this domain experience a decrease in performance under multimodal conditions (i.e., PushT, as shown in \cref{fig:pusht-result}), whereas STAC continues to exhibit \gtext{strong performance} across multiple domains. VLMs \vtext{must reason} over video to attain high true negative rates, as is necessary to combine them with STAC (see \cref{fig:full-system-result}).
        Sentinel, which runs STAC and the VLM monitor in parallel, detects 100\% of erratic policy failures in this domain.
        \vspace{-20pt}
    } 
    \label{tab:erratic-failures}
    \adjustbox{max width=\textwidth}{\begin{tabular}{clcccccccc|cccc}
        \toprule
        & \textbf{Category 1: Erratic Failures} & \multicolumn{3}{c}{\textbf{Close Box: In-Distribution}} & & \multicolumn{3}{c}{\textbf{Close Box: Out-of-Distribution}} & & & \multicolumn{3}{c}{\textbf{Close Box: Combined}} \\
        & & \multicolumn{3}{c}{(Policy Success Rate: 91\%)} & & \multicolumn{3}{c}{(Policy Success Rate: 41\%)} & & & \multicolumn{3}{c}{(Policy Success Rate: 67\%)} \\
        \cmidrule{3-5} \cmidrule{7-9} \cmidrule{12-14}
        & \textbf{Failure Detector} & TPR $\uparrow$ & TNR $\uparrow$ & Det. Time (s) $\downarrow$ & & TPR $\uparrow$ & TNR $\uparrow$ & Det. Time (s) $\downarrow$ & & & TPR $\uparrow$ & TNR $\uparrow$ & Accuracy $\uparrow$ \\
        \midrule
        \multirow{6}{*}{\STAB{\rotatebox[origin=c]{90}{\small \textbf{Diffusion}}}}
            & Temporal Non-Distr. Min. & 1.00 & 0.97 & 5.00 & & 1.00 & 0.27 & 12.35 & & & 1.00 & 0.77 & 0.85 \\
            & Diffusion Recon.~\cite{graham2023denoising} & 0.33 & 0.95 & 13.60 & & 0.40 & 1.00 & 17.08 & & & 0.37 & 0.96 & 0.76 \\
            & Temporal Diffusion Recon. & 1.00 & 0.96 & 8.47 & & 0.92 & 1.00 & 15.75 & & & \vcl{0.92} & \vcl{0.97} & \vcl{\underline{0.95}} \\
            & DDPM Loss (\cref{eq:ddpm-loss}) & 1.00 & 0.90 & 8.27 & & 1.00 & 0.94 & 14.54 & & & \vcl{1.00} & \vcl{0.91} & \vcl{0.94} \\
            & Temporal DDPM Loss & 1.00 & 0.95 & 7.53 & & 1.00 & 0.37 & 13.66 & & & 1.00 & 0.79 & 0.86 \\
            & Diffusion Output Variance & 0.33 & 0.94 & 14.00 & & 0.28 & 1.00 & 17.27 & & & 0.26 & 0.96 & 0.72 \\
        \midrule
        \multirow{3}{*}{\STAB{\rotatebox[origin=c]{90}{\footnotesize \textbf{Embed.}}}}
            & Policy Encoder & 0.25 & 0.98 & 16.27 & & \rcl{1.00} & \rcl{0.00} & \rcl{1.59} & & & 0.94 & 0.70 & 0.78 \\
            & CLIP Pretrained & 1.00 & 0.95 & 15.73 & & \rcl{1.00} & \rcl{0.00} & \rcl{8.20} & & & 1.00 & 0.68 & 0.79 \\
            & ResNet Pretrained & 1.00 & 0.95 & 17.87 & & \rcl{1.00} & \rcl{0.00} & \rcl{15.51} & & & 1.00 & 0.68 & 0.79 \\
        \midrule
        \multirow{3}{*}{\STAB{\rotatebox[origin=c]{90}{\small \textbf{STAC}}}}
            & STAC For. KL (Ours) & 1.00 & 0.90 & 6.60 & & \bcl{0.99} & \bcl{0.85} & \bcl{14.04} & & & 0.99 & 0.89 & 0.92 \\
            & STAC Rev. KL (Ours) & 1.00 & 0.95 & 7.60 & & \bcl{0.93} & \bcl{0.97} & \bcl{15.12} & & & \gcl{0.93} & \gcl{0.96} & \gcl{\underline{0.95}} \\
            & \textbf{STAC MMD*} (Ours) & 1.00 & 0.94 & 7.20 & & \bcl{0.99} & \bcl{0.93} & \bcl{14.72} & & & \gcl{0.99} & \gcl{0.94} & \gcl{\textbf{0.96}} \\
        \midrule
        \multirow{2}{*}{\STAB{\rotatebox[origin=c]{90}{\small \textbf{VLM}}}}
            & GPT-4o Image QA & 1.00 & 0.00 & 23.20 & & 1.00 & 0.00 & 23.20 & & & \kcl{1.00} & \kcl{0.00} & \kcl{0.29} \\
            & \textbf{GPT-4o Video QA*} (Ours) & 1.00 & 0.89 & 21.20 & & 0.69 & 0.95 & 21.02 & & & \kcl{0.77} & \kcl{0.91} & \kcl{0.87} \\        
        \midrule
        \midrule
        \multicolumn{2}{l}{\textbf{Sentinel (STAC MMD* + GPT-4o Video QA*)}} & 1.00 & 0.86 & 5.47 & & 1.00 & 0.90 & 14.25 & & & 1.00 & 0.87 & 0.91 \\
        \bottomrule
    \end{tabular}}
\end{table*}

\section{Results}\label{sec:results}

\begin{wrapfigure}{rt}{0.60\textwidth}
  \centering
  \vspace{-12pt}
  \includegraphics[width=1.0\textwidth]{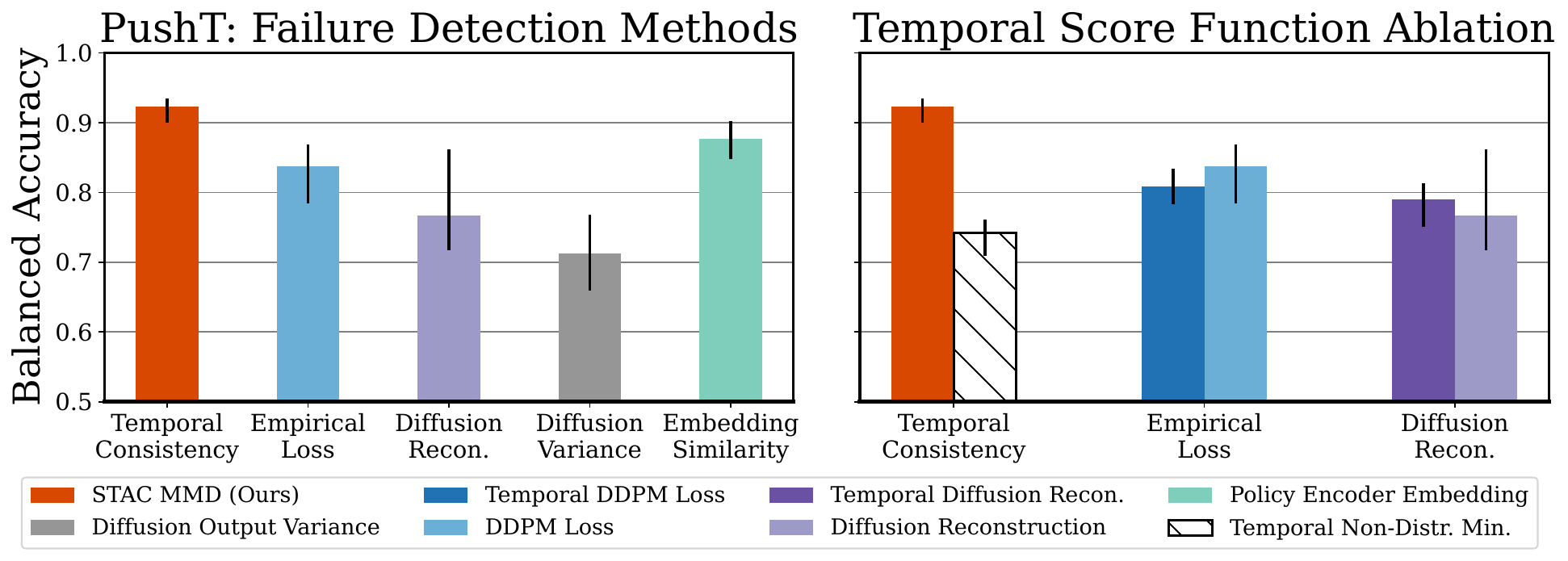}
  \caption{\small
        \textbf{Detecting failures in PushT.}
        Left: Our failure detector (\otext{STAC}) which measures the temporal consistency of a generative policy outperforms several families of out-of-distribution detectors. 
        Right: The best performance comes from measuring temporal consistency with statistical distance functions; augmenting baselines with temporal consistency does not always increase their performance.
        \vspace{-10pt}
    }
    \vspace{-2pt}
    \label{fig:pusht-result}
\end{wrapfigure}

\textbf{STAC detects diffusion policy failures in multimodal domains.}
\cref{fig:pusht-result} (Left) compares STAC against the best performing method of each baseline category in the \textbf{PushT} domain. 
Here, STAC is the only method to achieve a balanced accuracy of over 90\%, indicating that temporal consistency (or lack thereof) is strongly correlated with success (or failure).
Alternative output metrics, such as the DP's output variance, do not perform well because both successes and failures can exhibit high-variance outputs in multimodal domains.
Interestingly, the embedding similarity approach performs strongly, which indicates that state dissimilarity \textit{w.r.t.} the calibration dataset happens to be correlated with failure in this domain.

\textbf{Statistical measures of action similarity enable temporal consistency detection.}
\cref{fig:pusht-result} (Right) ablates the design decisions of STAC.
First, we observe that augmenting baselines with temporal consistency will at most marginally increase their performance. 
Second, using a non-statistical distance function (e.g., min. distance) to measure temporal action consistency performs worse than the baselines because it omits action multimodality.
Thus, it is the combination of statistical distance functions with temporal consistency that yields the best result. 
We refer to \cref{appx:stac-ablations} for an extended ablation of STAC.

\textbf{STAC accounts for OOD failures and generalization.} 
Results on the \textbf{Close Box} domains are shown in \cref{tab:erratic-failures}.
STAC attains the highest accuracy in aggregate (96\%). 
However, two of our newly proposed baselines---using the DDPM loss (\cref{eq:ddpm-loss}) and a temporal reconstruction variant of \cite{graham2023denoising}---also perform well, perhaps due to a decrease in action multimodality relative to \textbf{PushT}.
Notably, we find that embedding similarity methods conflate OOD states with policy failure, resulting in false positives when the policy succeeds OOD.
In contrast, STAC effectively differentiates OOD successes from failures.

\textbf{VLMs must reason across time.}
In \cref{tab:erratic-failures}, we find that a state-of-the-art VLM (\texttt{GPT-4o}) struggles to identify task success when given only a single image. Instead, it must observe the robot over the extent of a policy rollout to more accurately reason about task progression and changes in state (resulting in a $91\%$ TNR). 
While erratic failures are time-sensitive, they are visually more subtle and thus difficult for the VLM to detect (77\% TPR). The robot takes more obviously wrong actions (e.g., stalling, drifting astray) in task progression failures (\cref{fig:full-system-result}). As expected, the VLM has a significantly slower detection time relative to STAC, further highlighting STAC's value at quickly detecting erratic behavior.

\begin{wrapfigure}{rt}{0.4\textwidth}
  \centering
  \vspace{-14pt}
  \includegraphics[width=1.0\textwidth]{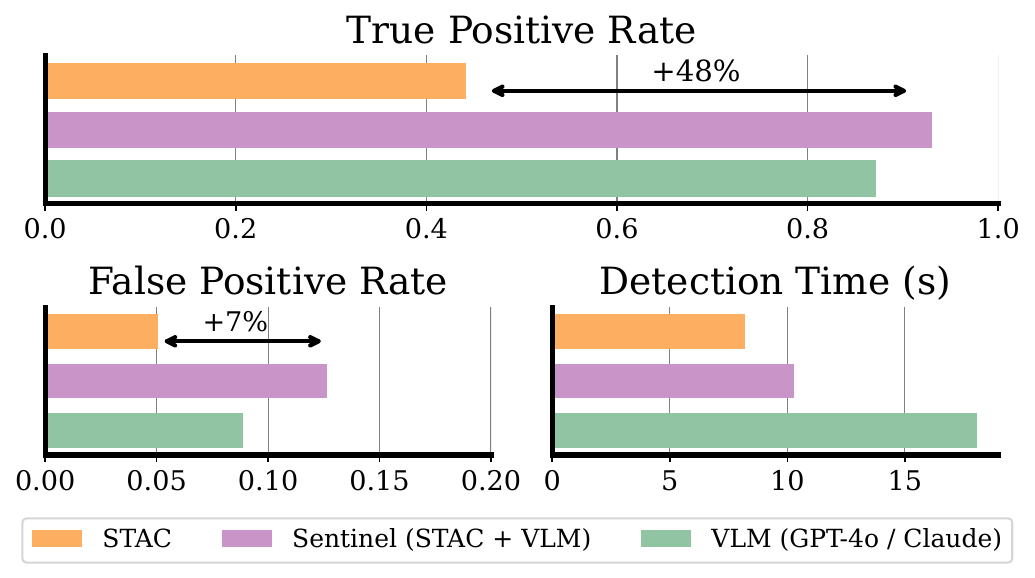}
  \caption{\small
      \textbf{Detecting task progression failures.} Combining VLMs with STAC yields an accurate overall detector (\vtext{Sentinel}) for both task progression and erratic failures (\cref{tab:erratic-failures}).
      See \cref{appx:extended-vlm-results} for extended results and analysis.
      \vspace{-12pt}
  }
  \label{fig:full-system-result}
  \vspace{-7pt}
\end{wrapfigure}

\textbf{Sentinel: Combining STAC and VLMs for system-level performance increases.}
We evaluate our failure detectors on distribution shifts that primarily lead to task progression failures in the \textbf{Cover Object} and \textbf{Close Box} domains.
The result is shown in \cref{fig:full-system-result}. 
STAC achieves a low TPR (44\%) when the policy fails in a temporally consistent manner, whereas the VLM (\texttt{GPT-4o} for \textbf{Close Box}, \texttt{Claude} for \textbf{Cover Object}) accurately detects task progression failures. 
As a result, their combination (Sentinel) achieves a 93\% TPR whilst incurring only a 7\% increase in FPR.
The rise in detection time indicates that both STAC (fast) and the VLM (slow) are contributing to the detection of failures. 

\textbf{Sentinel detects real-world, generative policy failures.}
We evaluate Sentinel on the \textbf{Push Chair} task across 10 successful and 10 failed policy rollouts.
The results are shown in \cref{tab:push-chair}.
When calibrated on only 10 successful in-distribution rollouts, STAC detects 80\% of policy failures and raises only one false alarm (90\% TNR).
The VLM exhibits stronger performance in the real world (90\% TPR, 100\% TNR) than it does in the simulation domains, perhaps because real-world images constitute a lesser domain gap for visual reasoning compared to images rendered in simulation. 
Overall, Sentinel achieves a 95\% detection accuracy, highlighting its efficacy for detecting failures in real-world robotic settings.

\begin{wraptable}{r}{0.4\textwidth}
    \vspace{-10pt}
    \centering
    \caption{\small
        \textbf{Detecting real-world failures.} Sentinel demonstrates strong failure detection performance on the real-world Push Chair task, achieving an overall accuracy of 95\%.
        \vspace{-10pt}
    }
    \label{tab:push-chair}
    \adjustbox{max width=\textwidth}{
        \begin{tabular}{lccc}
            \toprule
            \textbf{Failure Detector} & \textbf{TPR $\uparrow$} & \textbf{TNR $\uparrow$} & \textbf{Det. Time (s) $\downarrow$} \\
            \midrule
            Diffusion Output Variance & 0.60 & 0.90 & 10.67 \\
            Temporal Non-Distr. Min. & 0.70 & 0.80 & 9.52 \\
            \textbf{STAC Rev. KL} (Ours) & 0.80 & 0.90 & 9.83 \\
            \textbf{GPT-4o Video QA} (Ours) & 0.90 & 1.00 & 12.89 \\
            \midrule
            \midrule
            \rowcolor{green!13}
            \textbf{Sentinel (STAC + GPT-4o)} & 1.00 & 0.90 & 9.60 \\
            \bottomrule
        \end{tabular}
    }
\end{wraptable}

\textbf{Discussion.}
Holistic analysis of \cref{tab:erratic-failures}, \cref{fig:full-system-result}, \edit{and \cref{tab:push-chair}} shows that we can easily combine STAC and the VLM to yield a performant overall detector for both erratic and task progression failures, particularly because both detectors achieve a high overall TNR.
Since all the baselines may 1) show low accuracy on either of the erratic failure domains (i.e., the multimodal \textbf{PushT} and \textbf{Close Box} domains) or 2) yield a low TNR, it is unclear how to combine them with other detectors in a way that outperforms Sentinel.

\section{Conclusion}\label{sec:conclusion}

In this work, we investigate the problem of failure detection for generative robot policies.
We propose Sentinel, a runtime monitor that splits the failure detection task into two categories: 1) Erratic failures, which we detect by measuring the statistical change of a policy's action distributions over time; 2) task progression failures, where we use Vision Language Models to assess whether the policy is consistently taking actions that do not solve the task. 
Our results highlight the importance of targeting complementary failure categories with specialized detectors.
Future work includes the use of Sentinel to monitor high-capacity policies~\cite{octo_2023,kim2024openvla}, inform data collection, and accelerate policy learning.

\textbf{Limitations.} 
\edit{While categorizing erratic and task progression failures leads to accurate detection of failures across the domains we consider, these two failure categories may not be exhaustive.
In the future, introducing additional categories or further partitioning existing ones might provide a broader coverage of failures, allow for more efficient failure detection, and inform mitigation strategies.}
Furthermore, our approach does not provide formal guarantees on detecting failures.
However, providing such guarantees would require data of both successful and unsuccessful policy rollouts to calibrate the detector \cite{LuoZhaoSample2023}.
\edit{Although our detectors attain low false positive rates in aggregate, taking the union of their predictions may, in the worst case, increase the risk of false alarms. Thus, exploring more sophisticated ways to combine complementary failure detectors is a possible point of extension.}
Finally, our approach is not targeted at predicting failures before they occur, but instead focuses on detecting failures as they occur.

\clearpage
\acknowledgments{
    The authors would like to thank \authorhref{https://scholar.google.com/citations?user=9TPpYBMAAAAJ&hl=en}{Rachel Luo} and \authorhref{https://scholar.google.com/citations?user=uxiJcasAAAAJ&hl=en}{Apoorva Sharma} for their early-stage feedback on the project.
    Toyota Research Institute, Toshiba, and the Stanford Institute for Human-Centered Artificial Intelligence provided funds to support this work.
    This work was also supported by Blue Origin and the National Aeronautics and Space Administration under the University Leadership Initiative program.
}

\bibliography{references}  %

\newpage
\setlength{\parskip}{1em}

\section*{Appendix Overview: Unpacking Failure Modes of Generative Policies}
The appendix offers additional details with respect to the implementation of our failure detection framework (\cref{appx:method}), the experiments conducted (\cref{appx:experiments}), along with extended results and analysis (\cref{appx:results}), and finally, supporting derivations (\cref{appx:derivations}) for our proposed failure detectors.
\textbf{Qualitative results} and a \textbf{video abstract} are made available at \authorhref{https://sites.google.com/stanford.edu/sentinel}{sites.google.com/stanford.edu/sentinel}.

\begin{appendices}

\startcontents[sections]
\printcontents[sections]{l}{1}{\setcounter{tocdepth}{3}}

\clearpage

\section{Method Details: Sentinel}\label{appx:method}
As shown in \cref{fig:full-system}, the \textbf{Sentinel} runtime monitoring framework consists of the parallel operation of two complementary failure detectors, each assigned to the detection of a particular failure category of generative policies.
The first is a temporal consistency detector that monitors for erratic policy behavior via \underline{s}tatistical \underline{t}emporal \underline{a}ction \underline{c}onsistency (STAC) measures.
The second is a Vision Language Model (VLM) that monitors for failure of the policy to make progress on its task.
In this section, we provide additional details \textit{w.r.t.} the implementation of STAC (\cref{appx:stac}) and the VLM runtime monitor (\cref{appx:vlm}).

\subsection{Temporal Consistency Detection with STAC}\label{appx:stac}
\paragraph{Background}
To summarize \cref{sec:problem-setup}, STAC assumes the use of a stochastic policy $\pi$ that, at each policy-inference timestep $t$, predicts an action sequence for the next $h$ timesteps as $a_{t:t+h-1|t} \sim \pi(\cdot | s_t)$, executes the first $k$ actions $a_{t:t+k|t}$, before re-evaluating the policy at timestep $t+k$. 
Between two contiguous inference timesteps $t$ and $t+k$, sampled action sequences $a_{t+k:t+h-1|t}$ and $a_{t+k:t+h-1|t+k}$ (both in $\R^{(h-k) \times |\calA|}$) overlap for $h-k$ timesteps.
At a high-level, STAC seeks to quantify \textit{how much} a generative policy's action distributions are changing over time.
It does this by computing statistical distances between the distributions of overlapping actions, i.e., given $\bar{\pi}_{t}:=\pi(a_{t+k:t+h-1|t} | s_{t})$ and $\tilde{\pi}_{t+k}:=\pi(a_{t+k:t+h-1|t+k} | s_{t+k})$, we compute $D(\bar{\pi}_{t}, \tilde{\pi}_{t+k})$.

\paragraph{Hypothesis} 
Our central hypothesis is that large statistical distances correlate with downstream policy failure.
Intuitively, a predictive policy can be likened to possessing an internal world model that simulates how robot actions affect environment states.
When the policy is in distribution, we expect this world model to be accurate, thus resulting in smaller statistical distances.
More concretely, if the policy's internal model of state $s_{t+k}$ at timestep $t$ coincides with the actual observed state $s_{t+k}$ at timestep $t+k$, the distribution of actions $\tilde{\pi}_{t+k}$ should be well-represented by the distribution $\bar{\pi}_{t}$.
As a result, the distance $D(\bar{\pi}_{t}, \tilde{\pi}_{t+k})$ will be small (for the right choice of statistical distance function $D$).
Conversely, when the policy is out of distribution (OOD), its internal model of state $s_{t+k}$ at timestep $t$ may be inaccurate, yielding a divergence between 
$\bar{\pi}_{t}$ and $\tilde{\pi}_{t+k}$ and a larger statistical distance.  

\paragraph{Implementation Details} 
As mentioned in \cref{sec:temporal-consistency}, we propose to approximate $D(\bar{\pi}_{t}, \tilde{\pi}_{t+k})$ with an empirical distance function $\hat{D}$ instead of computing it analytically, as doing so presents the challenge of marginalizing out both the non-overlapping actions (between timesteps $t$ and $t+k$) and the intermediate steps of the diffusion process~\cite{sohl2015deep}.
We found the following approximations to work well in practice:
\begin{itemize}
    \item Maximum Mean Discrepancy (MMD) with radial basis function (RBF) kernels. We compute
    \begin{align*}
        \hat{D}(\bar{\pi}_{t}, \tilde{\pi}_{t+k}) 
            &= \E_{a_t, a_t' \sim \bar{\pi}_{t}}\left[k(a_t, a_t')\right] + \E_{a_{t+k}, a_{t+k}' \sim \tilde{\pi}_{t+k}}\left[k(a_{t+k}, a_{t+k}')\right] \\
            &- 2 \E_{a_t \sim \bar{\pi}_{t}, a_{t+k} \sim \tilde{\pi}_{t+k}}\left[k(a_t, a_{t+k})\right],\quad \text{where}\quad k(x, y;\; \beta_1) = \exp\left(-\frac{||x-y||^2}{\beta_1}\right).
    \end{align*}
    Here, $k: \R^{(h-k) \times |\calA|} \times \R^{(h-k) \times |\calA|} \rightarrow \R$ computes the similarity between two overlapping action sequences, and $\beta_1$ denotes the bandwidth of the RBF kernel.
    The expectations are taken over a batch of $B$ action sequences sampled from the generative policy. 
    \item Forward KL-divergence via Kernel Density Estimation (KDE) of the policy distributions:
    \begin{align*}
        \hat{D}(\bar{\pi}_{t}, \tilde{\pi}_{t+k}) = \E_{a_{t+k} \sim \tilde{\pi}_{t+k}}\left[\log\;\ \frac{p(a_{t+k})}{q(a_{t+k})} \right],
    \end{align*}
    where $p$ and $q$ are KDEs of $\tilde{\pi}_{t+k}$ and $\bar{\pi}_{t}$ fit on a batch of $B$ action sequences sampled from each policy distribution, respectively. As before, we use Gaussian RBF kernels of the form $k(x, y;\; \beta_2)$, where $\beta_2$ denotes the bandwidth of the RBF kernels used for KDE.
    \item Reverse KL-divergence via KDE of the policy distributions:
    \begin{align*}
        \hat{D}(\bar{\pi}_{t}, \tilde{\pi}_{t+k}) = \E_{a_{t} \sim \bar{\pi}_{t}}\left[\log\;\ \frac{p(a_{t})}{q(a_{t})} \right],
    \end{align*}
    where $p$ and $q$ are KDEs of $\bar{\pi}_{t}$ and $\tilde{\pi}_{t+k}$ fit on a batch of $B$ action sequences sampled from each distribution, respectively, and all other parameters follow the forward KL definitions.
\end{itemize}

\paragraph{Hyperparameters}
The batch size $B$, MMD bandwidth $\beta_1$, and KDE bandwidth $\beta_2$ are hyperparameters that we select for a given environment. 
As expected, we found that larger batch sizes are necessary for accurate mean embeddings and density estimates in domains with higher degrees of multimodality (e.g., $B = 256$ action sequences for \textbf{PushT} and \textbf{Push Chair}).
We also found that using either default settings or dynamic calibration techniques are sufficient to obtain suitable MMD and KDE bandwidth parameters $\beta_1$ and $\beta_2$, respectively.
For example, setting $\beta_2$ in proportion to the maximum eigenvalue of the covariance of overlapping actions $a_{t+k:t+h-1|\cdot}$ sampled from $\bar{\pi}_{t}$ and $\tilde{\pi}_{t+k}$ worked well in multimodal domains.
Further details on hyperparameters are provided in \cref{tab:stac-hyperparams}.

\begin{table*}[h!]
    \centering
    \caption{\small
        \textbf{Hyperparameters settings} for temporal consistency detection with STAC.
    } 
    \label{tab:stac-hyperparams}
    \adjustbox{max width=\textwidth}{\begin{tabular}{|l|ccc|}
        \toprule
        \textbf{Hyperparameters} & \textbf{PushT ($\uparrow$ Multimodality)} & \textbf{Close Box \& Cover Object ($\downarrow$ Multimodality)} & \textbf{Push Chair ($\uparrow$ Multimodality)}\\
        \midrule
        MMD + KDE batch size ($B$) & 256 & 32 & 256 \\
        MMD bandwidth ($\beta_1$) & Median Heuristic~\cite{gretton2005kernel,muandet2017kernel} & $1.0/|\calA|$ & Median Heuristic \\
        KDE bandwidth ($\beta_2$) & $\sqrt{\lambda_{\max}(\mathrm{Cov}(a_{t+k:t+h-1|\cdot}))}$ & $1.0$ & $\sqrt{\lambda_{\max}(\mathrm{Cov}(a_{t+k:t+h-1|\cdot}))}$\\
        \midrule
        Policy action space ($\calA$) & Linear Velocity & 2 x (Linear + Angular Velocity) & 1 x (Linear + Angular Velocity) \\
        Policy prediction horizon ($h$) & 16 & 16 & 16 \\
        Policy execution horizon ($k$) & 8 & 4 & 4 \\
        \bottomrule
    \end{tabular}}
\end{table*}

\paragraph{Additional Design Choices} 
There are several additional settings that one could adjust to increase STAC's detection performance on their task. 
First, filtering components of the policy's action space that are either noisy or discrete can increase the quality of the statistical distance score function. 
For example, the policy's action space in our robotic manipulator domains include end-effector linear and angular velocities, as well as a binary gripper command. However, when computing statistical distances, we omit all binary gripper commands. 
Next, reducing the execution horizon $k$ of the generative policy to compare action distributions that are closer in time can mitigate excessively large statistical distances in highly dynamic or stochastic environments.
Likewise, comparing action distributions over a shorter prediction horizon $h$ may be suitable if the tails of predicted action sequences e.g., exhibit high variance.

\subsection{Runtime Monitoring with Vision Language Models}\label{appx:vlm}
As described in \cref{sec:vlm-assessment}, we formulate the detection of task progression failures as a chain-of-thought (CoT)~\cite{wei2022chain}, video question answering (Video QA)~\cite{xu2016msr} task with VLMs. 
Below, we provide details on the implementation of our VLM runtime monitor and the prompt templates used in our experiments.

\paragraph{Vision Language Models}
In extended experiments (\cref{appx:extended-vlm-results}), we include variants of the VLM runtime monitor based on several models: OpenAI's GPT-4o, Anthropic's Claude 3.5 Sonnet, and Google's Gemini 1.5 Pro~\cite{reid2024gemini}.
At the time of writing, these represent the current state-of-the-art VLMs for complex, multimodal reasoning tasks.
We use consistent prompts across all models, however, we slightly vary the implementation of the monitor to reflect the suggested best practices of each VLM.

\paragraph{Implementation Details} 
We propose to query the VLM online with a parsed text prompt describing the runtime monitoring task and the history of images (i.e., a video) $I_{0:t} := (I_0, I_{\nu k}, I_{2\nu k}, \ldots, I_t)$ captured by the robot's camera system up to the current timestep $t$.
Here, the hyperparameter $\nu$ specifies the frequency of the images relative to the execution horizon $k$ of the generative policy (\cref{sec:problem-setup}) for generality, as the video may be captured at a much higher frame rate than the policy's execution rate. 
In experiments, simply setting $\nu \in \{1, 2\}$ provided sufficient granularity for the VLM to identify the motion of the robot.

By making non-blocking API calls, the VLM runtime monitor can operate at relatively high frequencies.
For example, the VLM can be queried at each policy-inference timestep $t = jk$ for $j \in \{0, 1, \dots\}$ (i.e., at STAC's detection frequency) to provide a failure classification.
However, depending on the task, doing so may neither be necessary nor desirable for two reasons.
First, because task progression failures are likely to occur at longer timescales than erratic failures, querying the VLM at a reduced frequency might provide time for meaningful changes in state to occur.
In turn, this would reduce redundancy in the VLM's predictions e.g., if no meaningful changes in state occurred since the last time it was queried.
Second, while STAC---a statistical output monitor---can be evaluated at negligible cost (computationally and monetarily), querying state-of-the-art, closed-source VLMs may come with considerable expense. 
Since task progression failures are unlikely to require immediate intervention (in contrast to erratic failures), querying the VLM less often than STAC could be preferable.
In experiments, we queried the VLM to detect task progression failures twice per episode.

The prompt template consists of three parts: 1) a brief description of the VLM's role as the runtime monitor of a robotic manipulator system, which it must execute by analyzing the attached video; 2) a description of the robot's task, the total amount of time that has elapsed\footnote{Online runtime monitoring requires the VLM to differentiate whether a) the robot is still in progress of executing the task correctly (i.e., partial progress) or whether b) the robot will fail to complete the task (e.g., by stalling in a partially completed state). 
Differentiating between partial progress and task failure can be ambiguous for a slow moving robot, and thus, providing the VLM with the current elapsed time serves as a reference to gauge whether or not the rate at which the robot is executing the task will result in a timely task completion.}, and time limit for the task (corresponding to the MDP horizon $H$ in \cref{sec:problem-setup}); 3) instructions to elicit a CoT response, ensuring that the VLM describes and analyzes the motion of the robot and all task-relevant objects and outputs a classification that can be easily parsed.
To remove ambiguity over the expected behavior of the robot and what constitutes task completion, we make sure that the task description is sufficiently detailed:

\begin{mdframed}[backgroundcolor=light-gray, roundcorner=10pt, leftmargin=1, rightmargin=1, innerleftmargin=14, innertopmargin=10,innerbottommargin=0, outerlinewidth=1, linecolor=light-gray]
\begin{lstlisting}[linewidth=\columnwidth,breaklines=true]
task_descriptions:
  cover: "hide the white box by covering it with the black blanket. The white box is located somewhere in front of the two robot arms and does not move. The black blanket starts directly in between the two robot arms"
  close: "close the white box by folding in the two smaller white side lids and the bigger white back lid. The white box is located in between the two robot arms and does not move. The robots should concurrently approach the side lids and push both side lids up, followed by approaching the back lid and folding up the back lid with both arms, without grasping the lids with the grippers"
  push_chair: "push the black chair into the circular table. The black chair starts directly in front of the robot. The robot should push black chair in a relatively straight line, without the chair rotating to the left or to the right, so that the seat of the chair is properly tucked under the circular table"
\end{lstlisting}
\end{mdframed}

We elicit a four-step CoT response from the VLM that 1) generates a set of task-relevant questions whose correct answers would fully characterize the motion of the robot and all task-relevant objects in the video, 2) answers the task-relevant questions while providing fine-grained visual details, 3) analyzes the questions, answers, and elapsed time to identify whether the robot is making progress towards task completion within the episode time limit, and 4) concludes with an overall classification in $\{\texttt{ok}, \ \texttt{failure}\}$. 
Interestingly, we found that prompting the VLM to generate its own questions instead of manually specifying them leads to more accurate descriptions of the video and ensuing predictions.

\paragraph{Prompt Template (Video QA)}
\begin{mdframed}[backgroundcolor=light-gray, roundcorner=10pt, leftmargin=1, rightmargin=1, innerleftmargin=14, innertopmargin=12,innerbottommargin=0, outerlinewidth=1, linecolor=light-gray]
\begin{lstlisting}[linewidth=\columnwidth,breaklines=true]
You are the runtime monitor for an autonomous mobile manipulator robot capable of solving common household tasks. A camera system captures a series of image frames (i.e., a video) of the robot executing its current task online. The image frames are captured at approximately 1Hz. As a runtime monitor, your job is to analyze the video and identify whether the robot is a) in progress of executing the task or b) failing to execute the task, for example, by acting incorrectly or unsafely.

The robot's current task is to {DESCRIPTION}. The robot may take up to {TIME_LIMIT} seconds to complete this task. The current elapsed time is {TIME} seconds.

Format your output in the following form:
[start of output]
Questions: First, generate a set of task-relevant questions that will enable you to understand the full, detailed motion of the robot and all task-relevant objects from the beginning to the end of the accompanying video.
Answers: Second, precisely answer the generated questions, providing fine-grained visual details that will help you accurately assess the current state of progress on the task.        
Analysis: Assess whether the robot is clearly failing at the task. Since the video only represents the robot's progress up to the current timestep and the robot moves slowly, refrain from making a failure classification unless the robot is unlikely to complete the task in the allotted time. Explicitly note the amount of time that has passed in seconds and compare it with the time limit (e.g., x out of {TIME_LIMIT} seconds). Finally, based on the questions, answers, analysis, and elapsed time, decide whether the robot is in progress, or whether the robot will fail to complete its task in the remaining time (if any).
Overall assessment: {CHOICE: [ok, failure]}
[end of output]

Rules:
1. If you see phrases like {CHOICE: [choice1, choice2]}, it means you should replace the entire phrase with one of the choices listed. For example, replace the entire phrase '{CHOICE: [A, B]}'  with 'B' when choosing option B. Do NOT enclose your choice in '{' '}' brackets. If you are not sure about the value, just use your best judgement.
2. Do NOT forget to conclude your analysis with an overall assessment. As indicated above with '{CHOICE: [ok, failure]}', your only options for the overall assessment are 'ok' or 'failure'.
3. Always start the output with [start of output] and end the output with [end of output].
\end{lstlisting}
\end{mdframed}

\subsubsection{Prompt Ensembling}\label{appx:prompt-ensemble}
The Video QA failure detection task requires comprehensive and detailed reasoning of over potentially long sequences of images, which, at the time of writing, is a challenge for even the most capable VLMs.
As such, we can expect the performance of the VLM runtime monitor to degrade when it is deployed in domains that are visually OOD \textit{w.r.t.} the VLM's training data (e.g., images rendered in simulation or captured from unusual camera poses). 
In these settings, the VLM runtime monitor may provide a reasonable but imperfect signal on task progression failures, resulting in misclassifications.

To strengthen the reliability of our VLM runtime monitor, we propose a simple prompt ensembling strategy~\cite{pitis2023boosted}, whereby we construct multiple Video QA prompts, query the VLM with each prompt, and take the overall failure classification to be the \textit{majority vote} of the predictions across all prompts. 
Intuitively, if the failure detectors associated with each individual prompt are fairly accurate to begin with, the resulting majority-vote detector will have an even higher probability of correctness.

In experiments, we only found it necessary to use prompt ensembling in the \textbf{Cover Object} domain.
We construct two variants of the Video QA prompt (3 prompts total), each of which follow a similar CoT structure while including additional information to diversify the VLM's reasoning. 
The first prompt variant, \textbf{Video QA + Success Video}, includes a video of a successful policy rollout for the current task.
This allows the VLM to distinguish off-nominal policy behavior at test time from nominal policy behavior illustrated in the example video.
The second prompt variant, \textbf{Video QA + Goal Images}, includes example images of the scene at the end of successful policy rollouts, which serve as a visual reference for task completion. 
In accordance with the assumptions of our framework, these prompt variants only require data associated with policy success to detect unknown failures at test time. 

\paragraph{Prompt Template (Video QA + Success Video)}
\begin{mdframed}[backgroundcolor=light-gray, roundcorner=10pt, leftmargin=1, rightmargin=1, innerleftmargin=14, innertopmargin=12,innerbottommargin=0, outerlinewidth=1, linecolor=light-gray]
\begin{lstlisting}[linewidth=\columnwidth,breaklines=true]
You are the runtime monitor... # Same as Video QA

To inform your analysis, you will be provided with an example video that shows the full motion of the robot and all task-relevant objects when the task is successfully executed. For example, the last image frame in the example video will show what the scene should look like at the end of a successsfully executed task. By comparing the current video with the example video, you may be able to visually distinguish when the robot is failing at the task versus when it is making steady progress or has completed.

The robot's current task is... # Same as Video QA

Questions: First, generate a set of task-relevant questions that will enable you to understand the full, detailed motion of the robot and all task-relevant objects from the beginning to the end of the accompanying video. In addition, generate questions that will enable you to identify any key similarities or differences between the current video and the example success video.
Answers: Second, precisely answer... # Same as Video QA
\end{lstlisting}
\end{mdframed} 
\vspace{6pt}

\paragraph{Prompt Template (Video QA + Goal Images)}
\begin{mdframed}[backgroundcolor=light-gray, roundcorner=10pt, leftmargin=1, rightmargin=1, innerleftmargin=14, innertopmargin=12,innerbottommargin=0, outerlinewidth=1, linecolor=light-gray]
\begin{lstlisting}[linewidth=\columnwidth,breaklines=true]
You are the runtime monitor... # Same as Video QA

To inform your analysis, you will be provided with several example images that show what the scene (i.e., the robot and all task-relevant objects) should look like at the end of a successfully executed task. By comparing the last few image frames of the current video with these example images, you may be able to visually distinguish when the robot is failing at the task versus when it is making steady progress or has completed.

The robot's current task is... # Same as Video QA

Questions: First, generate a set of task-relevant questions that will enable you to understand the full, detailed motion of the robot and all task-relevant objects from the beginning to the end of the accompanying video. In addition, generate questions that will enable you to identify any key similarities or differences between the current video and the example images of successfully completed tasks.
Answers: Second, precisely answer... # Same as Video QA
\end{lstlisting}
\end{mdframed}

\section{Experiment Details}\label{appx:experiments}

\subsection{Environments}\label{appx:environments}
We provide additional details on the environments used to evaluate Sentinel. 
These environments vary in terms of their data distribution (e.g., multimodal or high-dimensional actions) and support different types of distribution shift (e.g., object scale, pose, dynamics), under which the behavior of generative diffusion policies can be methodically studied. 
The environments are visualized in \cref{fig:environments}.

\subsubsection{Simulation Domains}

\paragraph{PushT Domain}
The policy is tasked with pushing a planar ``T''-shaped object into a goal configuration. 
A trajectory is considered successful if the overlap between the ``T''-shaped object and its goal exceeds 90\% within 300 environment steps.
The action space is the 2-DoF linear velocity of the end-effector. 
We generate OOD test scenarios by non-uniformly randomizing the scale and dimensions of the ``T''-shaped object beyond the randomizations contained in the policy's demonstration data.
The policy tends to fail by converging to a locally optimal configuration, where the ``T'' overlaps with its goal but in an incorrect orientation.
Since the task can be solved in a number of ways, we include this domain to evaluate the performance of various score functions in the presence of action multimodality.
We refer to \cite{Chi-RSS-23} for the process of generating demonstration data in this domain.
    
\paragraph{Close Box Domain} The policy is tasked with closing a box that has three lids. 
A trajectory is considered successful if all three lids are closed within 120 environment steps (24 seconds).
The action space is the 14-DoF linear + angular velocities and gripper commands for the end-effectors of two mobile manipulators. 
Demonstration data is generated by an oracle policy that sets a series of waypoints for the end-effectors based on the initial state. 
We generate OOD test scenarios by non-uniformly randomizing the scale of the box beyond the randomizations contained in the policy's demonstration data.
The policy tends to fail erratically when the robots e.g., collide with the box or its lids, however, task progression failures may also occur.
This domain is primarily used to evaluate the detection of erratic policy failures on a bi-manual robotic system with a high-dimensional action space.

\paragraph{Cover Object Domain} The policy is tasked with covering a rigid object with a cloth. 
A trajectory is considered successful if over 75\% of the object is covered by the cloth within 64 environment steps (13 seconds). 
The action space and process of generating demonstration data is identical to that of \textbf{Close Box}. 
We generate OOD test scenarios by non-uniformly randomizing the position of the object beyond the randomizations contained in the policy's demonstration data.
The policy tends to fail by releasing the cover before reaching the object.
Hence, this domain is used to evaluate the detection of task progression failures, where reasoning over longer durations is required to assess task progress.

\begin{figure}
    \includegraphics[width=\linewidth]{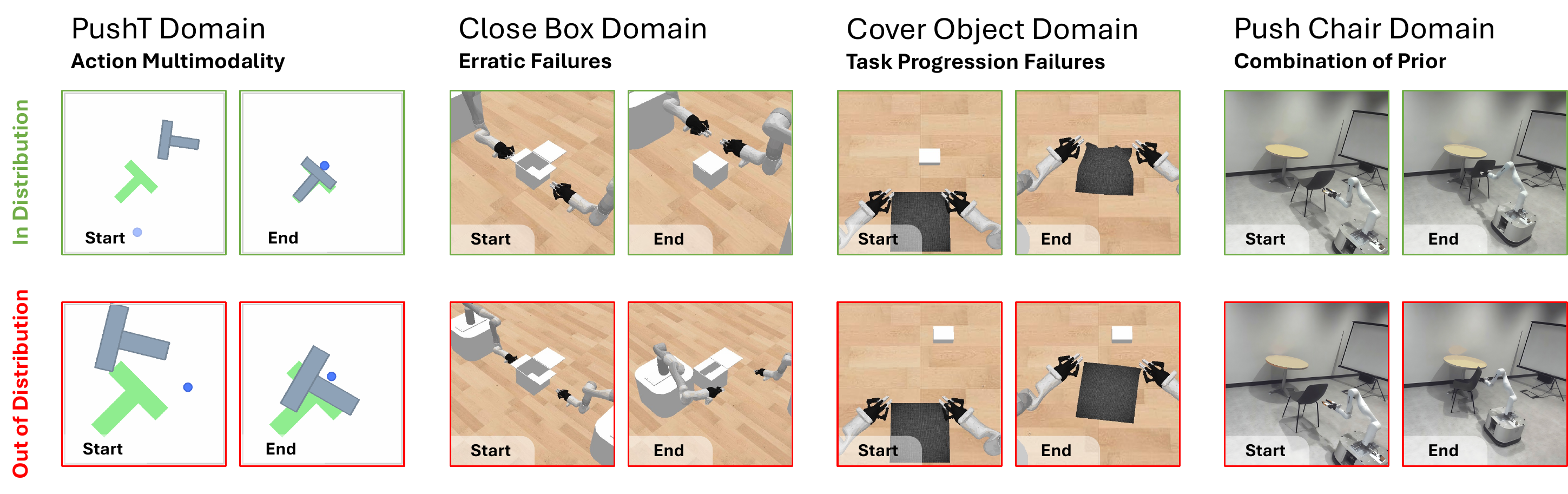}
    \caption{\small
        \textbf{Evaluation Domains.} We evaluate our failure detection framework across three simulation domains and one real-world domain. These domains provide coverage over different data distributions (e.g., action multimodality, high-dimensional actions) and modes of generative policy failure. For example, generative policies tend to fail erratically in the \textbf{Close Box} domain, but smoothly in the \textbf{Cover Object} domain. 
        An effective failure detector should be performant across multiple domains, which entails coverage over heterogeneous failure modes.
    }
    \label{fig:environments}
    \vspace{-5pt}
\end{figure}

\subsubsection{Real-World Domains}\label{appx:environments-real}

\paragraph{Mobile Robot Setup}
We use a holonomic mobile base equipped with a Kinova Gen3 7-DoF arm. 
A single ZED 2 camera is fixed in the workspace to capture visual observations for the generative policy. 
The ZED 2 camera first generates a partial-view point cloud of the environment, from which we segment task-relevant objects using the Grounded Segment Anything Model~\cite{ren2024grounded} based on a natural language description related to the task. 
The segmented point cloud serves as the visual input to the policy.
Additionally, we use a motion capture system to track the pose of the mobile base. 
During evaluation, the policy processes the point clouds, predicts a sequence of 16 actions, of which the first 4 are executed on the robot.
The mobile manipulator robot then maneuvers its arm according to these commands, adjusting the pose of the base if the end-effector moves outside a pre-defined workspace.

\paragraph{Push Chair Domain} 
The policy is tasked with tucking a chair into a table using a single-arm mobile manipulation platform.
A trajectory is considered successful if the seat of the chair is properly tucked under the table by the end of the policy rollout.
The action space is the 7-DoF linear + angular velocities and gripper command for the end-effector of the mobile manipulator robot. 
Demonstration data is extracted from human videos: we use an off-the-shelf hand detection model~\cite{pavlakos2024reconstructing}, an object segmentation model~\cite{cheng2023tracking,ren2024grounded}, and a stereo-to-depth model to extract human hand poses and object point clouds from a subsampled set of frames in each of the 15 human demonstration videos. 
We generate OOD test scenarios by randomizing the initial pose of the chair beyond the randomizations contained in the demonstration data. 
The policy tends to fail erratically if the chair rotates away in either direction when pushed, but such failures are also visually apparent. 
Therefore, this task is used to test the efficacy of both STAC and the VLM runtime monitor in a dynamically complex~\cite{ruggiero2018nonprehensile}, real-world setting.

\subsection{Diffusion Policies}\label{appx:diffusion-policy}

We train a diffusion policy (DP) for each environment, using 200 demonstrations for the \textbf{PushT} domain, 50 demonstrations for each of the \textbf{Close Box} and \textbf{Cover Object} domains, and 15 demonstrations for the real-world \textbf{Push Chair} domain.
In a DP, actions are generated by iteratively denoising an initially random action $a^N_t \sim \calN(0, 1)$ over $N$ steps as $a^N_t, \ldots, a^0_t$, where $a^i_t$ with a superscript $i$ denotes the generated action sequence at the $i$-th denoising iteration.
In an imitation learning setting, the DP's noise prediction network $\epsilon_\theta$ is trained to predict the \edit{random noise $\epsilon^i$} added to actions drawn from a dataset of expert demonstrations $\mathcal{D}_\mathrm{train}$ by minimizing
\begin{equation}\label{eq:ddpm-loss}
    \mathcal{L}_\mathrm{ddpm} := \E_{(s, a^0)\sim \mathcal{D_\mathrm{train}}, \epsilon^i, i} \left[||\epsilon^i - \epsilon_\theta(\sqrt{\bar\alpha_i} a^0 + \sqrt{1 - \bar{\alpha}_i}\epsilon^i, s, i)||^2 \right],
\end{equation}
where the constants $\bar \alpha_i$ depend on the chosen noise schedule of the diffusion process.

To increase the salience of distribution shift \textit{w.r.t.} the position and scale of objects, we use point clouds as inputs to the policy instead of RGB images (i.e., a 5\% increase in object scale may not be salient in an image). 
For simulation experiments, we use a diffusion policy architecture identical to the original paper~\cite{Chi-RSS-23} except for the visual encoder, where we substitute the ResNet-based encoder for a PointNet-based one: a 4-layer PointNet++ encoder~\cite{qi2017pointnet++} with hidden dimension 128. 
The output of this encoder is concatenated with the proprioceptive inputs and then fed to the noise prediction network.
For real-world experiments, we use the recently proposed EquiBot diffusion policy architecture~\cite{yang2024equibot}, which additionally incorporates SIM(3) equivariance into the diffusion process.
We use EquiBot to evaluate our failure detectors on a current state-of-the-art approach for learning generative policies in the real world.
All diffusion policies produce an action over $N=100$ denoising iterations. 
Unless otherwise specified, we use standard settings for the prediction $h$ and execution horizon $k$ of the diffusion policy.

\subsection{Baselines}\label{appx:baselines}
We outline the implementation details of our baselines as introduced in \cref{sec:experiments}.
First, with the exception of the VLM runtime monitors, all evaluated failure detection methods consist of computing a score $S(\cdot)$ at each policy-inference timestep in a rollout, taking the cumulative sum of scores up to the current timestep $t$, and then checking if the cumulative sum exceeds a calibrated threshold to detect policy failure.
As such, the baselines differ in their \textit{score function}, i.e., how they compute the per-timestep scores that are then summed and thresholded.
Intuitively, a good score function should be well-correlated with policy failure, that is, it should output small values when the policy is succeeding and large ones when it is failing.
For example, \cref{fig:error-result} demonstrates that STAC holds this property.
We baseline against an extensive suite of score functions, some of which we newly introduce for the case of generative diffusion policies, and others that are common in the OOD detection literature~\cite{sinha2022system}.

\subsubsection{STAC Baselines (Policy-Level Monitors)}\label{appx:stac-baselines}
\begin{itemize}
    \item \textbf{Policy Encoder Embedding} quantifies the dissimilarity of the current point cloud observation $o_t$ \textit{w.r.t.} to the point clouds in the calibration dataset of $M$ successful policy rollouts $\calD_\tau = \{\tau^i\}_{i=1}^M$ (as described in \cref{sec:problem-setup}) within the embedding space of the policy's encoder (here, $o_t$ denotes the point cloud input to the policy, which includes the point cloud at the current and previous timestep).
    More concretely, let $E$ be the policy's encoder, $z_t = E(o_t)$ be the current point cloud embedding, and $\calD_{z} = E(\calD_\tau)$ be the embeddings of all point clouds contained in the calibration dataset. We compute the per-timestep score as the Mahalanobis distance
    \begin{equation}\label{eq:mahal-embeddings}
        S(z_t;\;\; \calD_{z}) = \sqrt{(z_t - \mu_z)^T\; \Sigma_{zz}^{-1}\; (z_t - \mu_z)},
    \end{equation}
    where $\mu_z$ is the mean and $\Sigma_{zz}$ is the covariance of the embeddings in $\calD_{z}$. 
    At test time, we raise a failure warning if the cumulative score $\eta_t$ exceeds a calibrated detection threshold $\gamma$
    \begin{equation*}
        \eta_t > \gamma,\quad \text{where}\quad \eta_t = \sum_{i=0}^{j-1} S(z_i;\;\; \calD_{z}), \;\;\;\; t=jk.
    \end{equation*}
    Here, $\gamma$ is set to the $1 - \delta$ quantile of cumulative scores computed over the calibration trajectories $\{\eta^i_{|\tau^i|}\}_{i=1}^M$, where $\tau^i \in \calD_\tau$.
    Importantly, when computing the calibration scores $\eta^i$, we do so in a \textit{leave-trajectory-out} fashion: i.e., for a point cloud $o_t \in \tau^i$ where $\tau^i \in \calD_\tau$, we compute the per-timestep score as $S(E(o_t);\;\; E(\calD_\tau \setminus \tau^i))$. This ensures that the dissimilarity of observation $o_t$ is computed \textit{w.r.t.} trajectories other than its own, which a) aligns with how scores are computed at test time and b) ensures that calibration scores are not trivially low.

    We experimented with alternatives to the Mahalanobis distance in \cref{eq:mahal-embeddings}, substituting it with top-$k$ scoring for $k \in \{1, 5, 10\}$ based on cosine similarity and L2 distance metrics.
    However, we found the Mahalanobis distance to be the most stable.
    We also evaluated variants of this baseline that compute the dissimilarity of the full policy state $s_t$ (including both the point cloud embedding and the robots' end-effector poses), but found equivalent performance. 
        
    \item \textbf{CLIP Pretrained Embedding} quantifies the dissimilarity of the current image observation $I_t$ \textit{w.r.t.} to the images in the calibration dataset $\calD_\tau = \{\tau^i\}_{i=1}^M$ within the embedding space of a pretrained CLIP encoder~\cite{radford2021learning}. The score function (\cref{eq:mahal-embeddings}) and calibration process are identical to those of \textbf{Policy Encoder Embedding}. 
    Importantly, the encoder used here is trained with a representation learning objective, which results in a structured embedding space and more interpretable embedding similarity scores. In our experiments, we use the open-source \texttt{clip-vit-base-patch32} version of CLIP without any fine-tuning.
    
    \item \textbf{ResNet Pretrained Embedding} is identical to \textbf{CLIP Pretrained Embedding}, except it quantifies image-space dissimilarity using embeddings from a ResNet18 pretrained model~\cite{he2016deep}.
    
    \item \textbf{Temporal Non-Distributional Minimum} is similar to STAC (\cref{appx:stac}) in that it seeks to compute a consistency score between overlapping actions $a_{t+k:t+h-1|t}$ and $a_{t+k:t+h-1|t+k}$ sampled from the generative policy at contiguous policy-inference timesteps $t$ and $t+k$, respectively. However, it does so by using a non-statistical distance function. 
    In particular, this baseline computes the per-timestep temporal consistency score at timestep $t+k$ as
    \begin{equation*}
        S(s_{t+k}) = \underset{b \in \{1..B\}}{\min} \;\;\left\lVert a_{t+k:t+h-1|t} - a^b_{t+k:t+h-1|t+k} \right\rVert, 
    \end{equation*}
    where $a^b_{t+k:t+h-1|t+k} \sim \tilde{\pi}_{t+k}(\cdot | s_{t+k})$.
    That is, we sample a batch of $B$ action sequences at timestep $t+k$, compute their L2 distances \textit{w.r.t.} the overlapping actions of the previously executed action sequence $a_{t+k:t+h-1|t}$, and return the L2 distance associated with the most similar action sequence. Intuitively, this baseline attempts to find the closest action sequence at timestep $t+k$ to the previously executed action sequence, while STAC attempts to quantify how well the action distribution $\tilde{\pi}_{t+k}$ at timestep $t+k$ is represented in the distribution $\bar{\pi}_{t}$ at timestep $t$. 
    The values of $B$ are provided in \cref{tab:stac-hyperparams}.
    The calibration and runtime procedures of this baseline are identical to those of STAC (\cref{sec:temporal-consistency}).
    
    \item \textbf{Diffusion Reconstruction} adapts the diffusion-based OOD detection approach of \citet{graham2023denoising} for the case of diffusion policies. Specifically, this baseline computes the reconstruction error on re-noised action sequences sampled from the diffusion policy as
    \begin{equation}\label{eq:diffusion-recon-score}
        S(s_t) = \E_{a^0\sim \pi(\cdot | s_t), \epsilon^i, i} \left[\left\lVert a^0 - \epsilon^{i:0}_\theta(\sqrt{\bar\alpha_i} a^0 + \sqrt{1 - \bar{\alpha}_i}\epsilon^i, s_t)\right\rVert^2 \right],
    \end{equation}
    where $\epsilon^{i:0}_\theta$ denotes the reverse diffusion process from the $i$-th denoising iteration to the $0$-th iteration, resulting in the reconstructed action. We approximate the expectation in \cref{eq:diffusion-recon-score} over a batch of $B = 256$ action sequences sampled from the diffusion policy, each re-noised for $i \in \{5, 10, 25, 50\}$ forward diffusion steps (also referred to as \textit{reconstruction depths}). 
    We experimented with several sets of reconstruction depths and found comparable performance. 
    We note that this baseline comes with significant computational expense as it needs to perform the denoising process multiple times: i.e., if we would like to compute $R$ reconstructions, this baseline is approximately $R$ times more expensive than a single reverse diffusion process.
    The calibration and runtime procedures of this baseline are identical to those of STAC (\cref{sec:temporal-consistency}).
    
    \item \textbf{Temporal Diffusion Reconstruction} is a temporal variant of \textbf{Diffusion Reconstruction} that also computes the reconstruction error on re-noised action sequences sampled from the diffusion policy, but reconstructs the action sequences conditioned on the previous state $s_t$ as
    \begin{equation*}
        S(s_t, s_{t+k}) = \E_{a_{t+k:t+h-1|t+k}^0\sim \tilde{\pi}_{t+k}, \epsilon^i, i} \left[\left\lVert \hat{a}^0 - \epsilon^{i:0}_\theta(\sqrt{\bar\alpha_i} \hat{a}^0 + \sqrt{1 - \bar{\alpha}_i}\epsilon^i, s_t)\right\rVert^2 \right].
    \end{equation*}
    Here, $\hat{a}^0$ denotes the action sequence over which reconstructions are computed, concatenating the first $k$ (executed) actions sampled at timestep $t$ with following $h-k$ (predicted) actions sampled at timestep $t+k$: that is, $\hat{a}^0 = a_{t:t+k|t} \oplus a_{t+k:t+h-1|t+k}^0$. 
    This step is necessary to ensure that the denoising process conditioned on $s_t$ only considers actions within the policy's prediction horizon. 
    This baseline represents an alternative form of temporal consistency.
    Intuitively, it asks whether action sequences sampled at timestep $t+k$ would also be sampled at timestep $t$, to which the answer is likely yes if the policy is in distribution, and likely no if the policy is OOD---because the marginal distributions conditioned on $s_t$ versus $s_{t+k}$ may be different.
    The hyperparameters of this baseline follow those of \textbf{Diffusion Reconstruction}.
    
    \item \textbf{DDPM Loss} computes the empirical DDPM loss on re-noised action sequences sampled from the diffusion policy as
    \begin{equation*}
        S(s_{t}) = \E_{a^0\sim \pi(\cdot | s_t), \epsilon^i, i} \left[\left\lVert \epsilon^i - \epsilon_\theta(\sqrt{\bar\alpha_i} a^0 + \sqrt{1 - \bar{\alpha}_i}\epsilon^i, s_t, i)\right\rVert^2 \right].
    \end{equation*}
    Here, the expectation is taken over a batch of $B = 256$ sampled action sequences and $10$ sampled denoising iterations $i \sim \calU[0, N)$, where $N$ is the total number of denoising iterations (\cref{appx:diffusion-policy}). 
    We can think of this baseline as a more efficient version of \textbf{Diffusion Reconstruction}, since it directly quantifies the diffusion policy's performance on its training task without the need to reconstruct actions over numerous denoising iterations. 
    The calibration and runtime procedures of this baseline are identical to those of STAC (\cref{sec:temporal-consistency}).
    
    \item \textbf{Temporal DDPM Loss} is a temporal variant of \textbf{DDPM Loss} that also computes the empirical DDPM loss on re-noised action sequences sampled from the diffusion policy, but does so conditioned on the previous state $s_t$ as
    \begin{equation*}
        S(s_t, s_{t+k}) = \E_{a_{t+k:t+h-1|t+k}^0\sim \tilde{\pi}_{t+k}, \epsilon^i, i} \left[\left\lVert \epsilon^i - \epsilon_\theta(\sqrt{\bar\alpha_i} \hat{a}^0 + \sqrt{1 - \bar{\alpha}_i}\epsilon^i, s_t, i)\right\rVert^2 \right],
    \end{equation*}
    where $\hat{a}^0 = a_{t:t+k|t} \oplus a_{t+k:t+h-1|t+k}^0$ (as defined in \textbf{Temporal Diffusion Reconstruction}).
    The hyperparameters of this baseline follow those of \textbf{DDPM Loss}, over which it is expected to offer advantages via temporal consistency.

    \item \textbf{Diffusion Output Variance} computes the variance over $B$ action sequences sampled from the diffusion policy and thresholds it \textit{w.r.t.} the $1-\delta$ quantile of sample variances computed over the calibration dataset $\calD_\tau$. 
    This baseline reflects an alternative output metric to temporal consistency that can be monitored to detect policy failure. 
    While computing output variances might bear resemblance to ensemble methods~\cite{LakshminarayananPritzelEtAll2017}, we note that this approach does not quantify epistemic model uncertainty. Doing so would require training multiple diffusion policies and performing inference with each at test time, which we avoid due to computational expense.
    The hyperparameters of this baseline are identical to those of STAC (see \cref{tab:stac-hyperparams}). 
        
\end{itemize}
\paragraph{Discussion on Baselines} 
First, we highlight that the embedding-based approaches predict failure solely based on the dissimilarity or atypicality of the current state.
Hence, these baselines are not \textit{policy aware}: they may raise failure warnings for states that are dissimilar from those contained in the calibration dataset $\calD_\tau$ without understanding how the policy behaves in those states. 
In some cases, the policy may still succeed or generalize to minor distribution shifts in state, causing the detection performance of these baselines to significantly diminish.
The reconstruction-based approaches may account for the generalization characteristics of the policy but come with computational expense, which may prohibit their use in real-time settings.
The DDPM loss approaches present the next best alternative to STAC, as their score functions coincide with the diffusion policy's training task and can be computed at negligible computational cost. 
However, we note that the DDPM loss baseline is specific to diffusion policies, whereas STAC is agnostic to the generative policy formulation.

\subsubsection{VLM Baselines (Task-Level Monitors)}\label{appx:vlm-baselines}
As described in \cref{sec:vlm-assessment}, we propose to monitor the task progress of a generative policy by zero-shot prompting a VLM to analyze a video of the robot's execution up to the current timestep.
We contrast the performance of our Video QA approach with a variation, Image QA, that queries the VLM using only $I_t$, the image recorded at the current timestep $t$, rather than the full video $I_{0:t}$
This baseline is used to evaluate the importance of video-based reasoning compared to single images. 
We construct the Image QA prompt by minimally modifying the Video QA prompt (\cref{appx:vlm}) as shown below:

\paragraph{Prompt Template (Image QA)}
\begin{mdframed}[backgroundcolor=light-gray, roundcorner=10pt, leftmargin=1, rightmargin=1, innerleftmargin=14, innertopmargin=12,innerbottommargin=0, outerlinewidth=1, linecolor=light-gray]
\begin{lstlisting}[linewidth=\columnwidth,breaklines=true]
You are the runtime monitor for an autonomous mobile manipulator robot capable of solving common household tasks. A camera system captures image frames (at approximately 1Hz) of the robot executing its current task online. As a runtime monitor, your job is to analyze the most recent image frame and identify whether the robot is a) in progress of executing the task or b) failing to execute the task, for example, by acting incorrectly or unsafely.

The robot's current task is to {DESCRIPTION}. The robot may take up to {TIME_LIMIT} seconds to complete this task. The current elapsed time is {TIME} seconds.

Format your output in the following form:
[start of output]
Questions: First, generate a set of task-relevant questions that will enable you to thoroughly analyze the image frame and identify what actions the robot has taken so far.
Answers: Second, precisely answer the generated questions, providing fine-grained visual details that will help you accurately assess the current state of progress on the task.
Analysis: Assess whether the robot is clearly failing at the task. Since the image frame only represents the robot's progress up to the current timestep and the robot moves slowly, refrain from making a failure classification unless the robot takes unsafe actions or is unlikely to complete the task in the allotted time. Explicitly note the amount of time that has passed in seconds and compare it with the time limit (e.g., x out of {TIME_LIMIT} seconds). Finally, based on the questions, answers, analysis, and elapsed time, decide whether the robot is in progress, or whether the robot will fail to complete its task in the remaining time (if any).
Overall assessment: {CHOICE: [ok, failure]}
[end of output]

Rules:
1. If you see phrases like {CHOICE: [choice1, choice2]}, it means you should replace the entire phrase with one of the choices listed. For example, replace the entire phrase '{CHOICE: [A, B]}'  with 'B' when choosing option B. Do NOT enclose your choice in '{' '}' brackets. If you are not sure about the value, just use your best judgement.
2. Do NOT forget to conclude your analysis with an overall assessment. As indicated above with '{CHOICE: [ok, failure]}', your only options for the overall assessment are 'ok' or 'failure'.
3. Always start the output with [start of output] and end the output with [end of output].
\end{lstlisting}
\end{mdframed}

\subsection{Evaluation Protocol}\label{appx:evaluation-protocol}

\subsubsection{Definition: Policy Failure}\label{appx:policy-failure}
Consider a policy $\pi(a | s)$ that operates within a finite-horizon Markov Decision Process (MDP): a 5-tuple $\langle \mathcal{S}, \mathcal{A}, T, R, H \rangle$, where $\mathcal{S}$ and $\mathcal{A}$ are the state and action spaces, $T(s' | s, a)$ is the transition model, $R(s, a, s')$ is the reward model, and $H$ is the MDP horizon.
Given an initial state $s_0$ representative of a new test scenario, executing the policy for $t$ timesteps produces a trajectory $\tau_t = (s_0, a_0, \ldots, s_t)$.
The trajectory's \textit{return} is defined as the cumulative sum of rewards: $R(\tau_t) = \sum_{t'=0}^{t-1}R(s_{t'}, a_{t'}, s_{t'+1})$.

We define \textbf{policy failure} simply in terms of task completion. 
More formally, given a defined success threshold $R_\tau$, the policy fails if the return on its trajectory $\tau_t$ does not exceed the success threshold within the MDP horizon: $R(\tau_t) < R_\tau$ where $t \geq H$.
In the simplest case, the success threshold $R_\tau$ equals $1$, and the reward model $R(s, a, s')$ equals $1$ \textit{iff} the task is complete at state $s'$.
For example, if the robot is tasked with picking up a cup and receives a reward of $1$ only once the cup is firmly grasped.
In experiments, we adhere to this definition of policy failure and threshold trajectory returns as $R(\tau_H) < R_\tau$ to compute ground-truth labels for whether or not a policy failed in a trajectory $\tau_H$.

\textbf{Relation to failure detection:} In \cref{sec:problem-setup}, we provide a definition of the failure detection task---to detect whether a trajectory $\tau_H$ constitutes a policy failure at the earliest possible timestep $t$---that is different from detecting the specific timestep at which (or before) the policy ``fails.'' 
Doing so removes the need to manually specify task-specific failure criteria required to e.g., label each timestep in a trajectory. 
While the goal of our failure detectors is thus to flag failure episodes, it is still beneficial to catch failures at the earliest possible timestep, which is why a) our above definition of policy failure is formulated in terms of a partial trajectory $\tau_t$ up to the current timestep $t \leq H$ of the MDP, b) we propose an online detection scheme that monitors for failure at each timestep $t$ based on the trajectory up to the current timestep (i.e., $f(\tau_t) \rightarrow \{\texttt{ok}, \texttt{failure}\}$), and c) we report the detection time as a performance metric.

\subsubsection{Constructing the Calibration Dataset} 
Calibrating STAC and its baselines requires a small dataset of successful policy rollouts $\calD_\tau = \{\tau^i\}_{i=1}^M$, which provide grounding on the nominal, in-distribution behavior of the policy.
This allows us to evaluate the test-time behavior of a potentially failing policy \textit{w.r.t.} its known nominal behavior.

\paragraph{Calibration Data Quality}
We found it important to ensure the quality of trajectories $\tau^i \in \calD_\tau$.
Specifically, trajectories in which the policy succeeds but in an undesired or unacceptable manner should not be used for calibration.
For example, the policy may solve the \textbf{Close Box} task (\cref{fig:environments}) but damage the lids of the box in the process. 
Including such a trajectory in the calibration dataset would define this behavior as \textit{nominal} and degrade the sensitivity of the detectors at test time. 
Returning to our example, the detectors may not raise a failure warning if the policy damages a box at test time.

\paragraph{Collecting the Calibration Dataset}
In practice, such a calibration dataset could be collected during a policy validation phase prior to deployment.
For instance, we collect $M = 50$ successful policy rollouts for each simulation domain, manually filtering episodes where the policy succeeded with unacceptable behavior (e.g., with jitter). 
We hypothesize that the performance of the detectors \textit{w.r.t.} the number of rollouts $M$ is task specific.
For example, a smaller calibration dataset may be sufficient for tasks with low variability (i.e., in a single, structured environment), while a larger dataset may be necessary if the policy is to be deployed at scale. 
We note, however, that increasing the calibration dataset size may be desirable to achieve stronger conformal guarantees on the detector's FPR (as derived in \cref{appx:derivations}).

\paragraph{Calibrating on Demonstration Data} 
Finally, in attempt to eliminate the need to collect an additional calibration dataset of successful policy rollouts, we experimented with variants of STAC that directly calibrated on trajectories contained in the policy's demonstration dataset.
However, doing so led to a significant increase in the detector's FPR.
We attribute this to the well-known covariate shift problem for imitation learned policies~\cite{ross2010efficient,ross2011reduction}. 
That is, their prediction error increases quadratically on states induced under the policy, causing the detectors' to mistake successful test-time rollouts for failures. 

\subsubsection{Testing \& Evaluation}
Instead of evaluating the failure detectors online (i.e., during policy rollouts), we collect several test datasets of policy rollouts, which consist of both successes and failures. 
Each trajectory is labeled either success or failure by thresholding the return at the final state of the episode (as detailed in \cref{appx:policy-failure}).
We then perform offline evaluation of the failure detectors by invoking them at each timestep of the trajectory, which allows us to identify the first timestep at which the detectors issue a warning.

\subsubsection{Reported Metrics}\label{appx:reported-metrics}
We expand on the metrics outlined in \cref{sec:problem-setup}. 
We first define a \textit{positive} as a trajectory where the policy fails and a \textit{negative} as a trajectory where the policy succeeds.
A true positive is counted if the failure detector raises a warning at any timestep in a trajectory where the policy fails. 
A true negative is counted if the failure detector never raises a warning in a trajectory where the policy succeeds. 
The definitions for false positive and false negative follow accordingly.
Detection time is defined as the earliest timestep in which the failure detector raises a warning in a trajectory where the policy fails. 

In our experiments, we report true positive rate (TPR), true negative rate (TNR), false positive rate (FPR), detection time (DT), accuracy, and balanced accuracy.
TPR measures the number of true positives (detected failures) over total number of positives (failures). 
TNR measures the number of true negatives (detected successes) over total number of negatives (successes).
FPR measures the number of false positives (false alarms) over the total number of negatives (successes). 
Accuracy and balanced accuracy account for both the TPR and TNR of the detector. 
However, we report balanced accuracy when the test set contains a non-negligible imbalance of positive and negative trajectories.

\section{Additional Results}\label{appx:results}

\subsection{Ablation Experiments on STAC}\label{appx:stac-ablations}

\textbf{Does STAC's performance depend on the policy's prediction and execution horizon?}

We conduct an ablation study on the \textbf{PushT} domain to test how the performance of STAC varies with respect to the prediction horizon $h$ and execution horizon $k$ of the diffusion policy.
Together, the prediction and execution horizons determine the number of temporally overlapping action components (i.e., between $a_{t+k:t+h-1|t}$ and $a_{t+k:t+h-1|t+k}$) that are statistically compared by STAC, while the execution horizon governs how far apart in time the action distributions $\bar{\pi}_{t}$ and $\tilde{\pi}_{t+k}$ are generated. 

\begin{wrapfigure}{rt}{0.37\textwidth}
  \centering
  \vspace{-15pt}
  \includegraphics[width=1.0\textwidth]{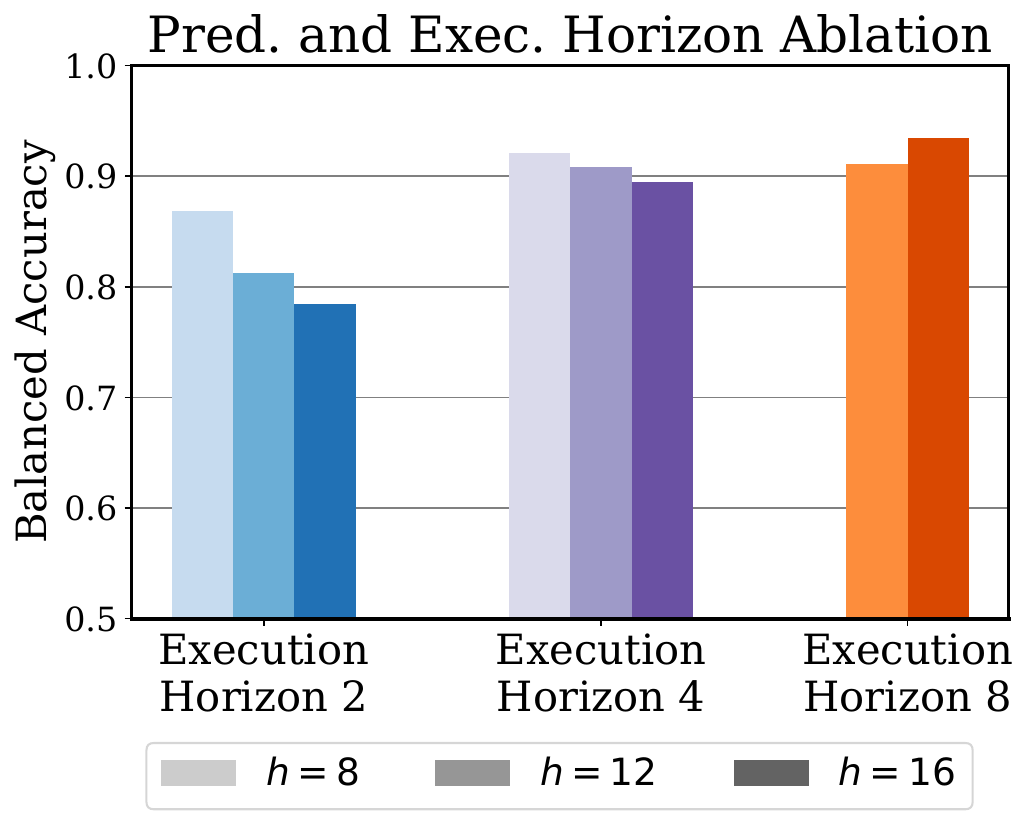}
  \caption{\small
        Performance variation of STAC subject to different policy prediction and execution horizons in \textbf{PushT}.
        \vspace{-12pt}
    }
  \label{fig:pusht-abl-result}
  \vspace{-4pt}
\end{wrapfigure}

The result is shown in \cref{fig:pusht-abl-result}.
We find that STAC (MMD) performs comparatively across execution horizons of $k = 4$ and $k = 8$, but performs best with the standard diffusion policy settings of $k = 8$ and $h = 16$ (used for the main result in \cref{fig:pusht-result}).
The detector's performance drops when using the smallest execution horizon of $k = 2$.
We attribute this to the relatively small amount of environment change that occurs within two execution steps, which causes $\bar{\pi}_{t}$ and $\tilde{\pi}_{t+k}$ to be similarly distributed and leads to overly conservative statistical distances. 
This is reflected in our results, where the detectors attain $>95$\% TNRs across various execution horizons, but using $k = 2$ leads to a significant drop in TPR to $61$\%, while $k = 4$ and $k = 8$ attain TPRs of $78$\% and $95$\%, respectively.

\begin{wraptable}{r}{0.5\textwidth}
    \vspace{-10pt}
    \centering
    \caption{\small
        STAC ablation on policy prediction horizon $h$.
        \vspace{-24pt}
    }
    \label{tab:horizon-ablation}
    \adjustbox{max width=\textwidth}{
        \begin{tabular}{clcccc}
            \toprule
            & \textbf{Domain} & \textbf{Pred. Horizon ($h$)} & \textbf{TPR $\uparrow$} & \textbf{TNR $\uparrow$} & \textbf{Accuracy $\uparrow$} \\
            \midrule
            \multirow{3}{*}{\STAB{\rotatebox[origin=c]{90}{\small \textbf{Sim.}}}}
            & Close Box & 8 & 0.92 & 0.94 & 0.93 \\
            & Close Box & 12 & 0.88 & 1.00 & 0.93 \\
            & Close Box & 16 & 0.96 & 1.00 & 0.98 \\
            \midrule
            \multirow{3}{*}{\STAB{\rotatebox[origin=c]{90}{\small \textbf{Real}}}}
            & Push Chair & 8 & 1.00 & 0.80 & 0.90 \\
            & Push Chair & 12 & 0.80 & 0.80 & 0.80 \\
            & Push Chair & 16 & 0.80 & 0.90 & 0.85 \\
            \bottomrule
        \end{tabular}
    }
    \vspace{-18pt}
\end{wraptable}

To further ablate the choice of policy prediction horizon, we conduct a similar study on the simulated \textbf{Close Box} and real-world \textbf{Push Chair} domains.
The result is shown in \cref{tab:horizon-ablation}, where we find that STAC is quite robust to the choice of prediction horizon, while the best result is achieved by using the standard setting of $h=16$. 

Overall, STAC's performance may vary with the policy's execution horizon, but is relatively stable across choices of the policy's prediction horizon.
The fact that we calibrate STAC and deploy it with the same prediction horizon may normalize the influence of this parameter at test time.

\textbf{How does STAC's performance vary with the choice of statistical distance function?}

\begin{wrapfigure}{rt}{0.35\textwidth}
  \centering
  \vspace{-14pt}
  \includegraphics[width=1.0\textwidth]{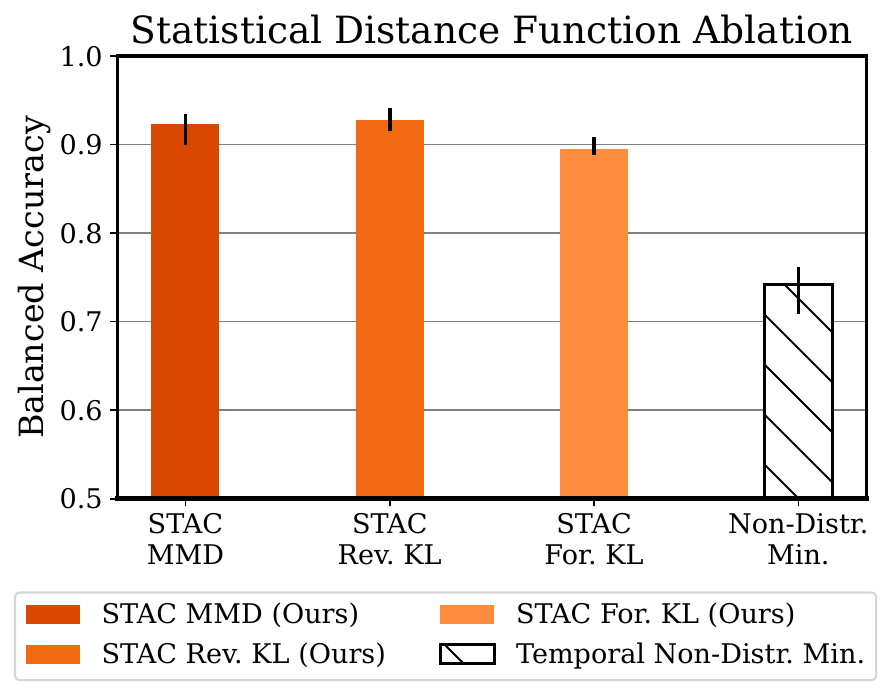}
  \caption{\small
        Performance variation of STAC based on the choice of statistical distance function in \textbf{PushT}.
        \vspace{-8pt}
    }
  \label{fig:pusht-abl-result-functions}
  \vspace{-4pt}
\end{wrapfigure}

\cref{fig:pusht-abl-result-functions} ablates STAC's performance across various choices of statistical distance functions in the \textbf{PushT} domain.
Here, we find that STAC performs comparably across common choices like maximum mean discrepancy (MMD) with RBF kernels and KL-divergence via kernel density estimation (details in \cref{appx:stac}).
This finding is corroborated in \cref{tab:erratic-failures-full}, where all variants of STAC attain a detection accuracy of over 90\% in the \textbf{Close Box} domain.
In contrast, we observe a large performance drop when using a non-statistical distance function (``Temporal Non-Distr. Min''; \cref{appx:stac-baselines}) to measure temporal action consistency.
This performance drop can be attributed to stochastic multimodality of the generative policy, which makes it challenging to sample individual actions that are similar to those at preceding timesteps during policy rollout.
Because the non-statistical distance function is more sensitive than statistical distance functions to stochasticity in action sampling, we see an increased occurrence of false alarms.

\subsection{Extended Results: VLM Runtime Monitor}\label{appx:extended-vlm-results}
To supplement the analysis provided in \cref{sec:results}, we herein focus on the performance of our VLM runtime monitor and its complementary role to STAC for the detection of erratic and task progression failures.

\paragraph{Erratic Failure Analysis}
\cref{tab:erratic-failures-full} presents the extended results of our experiments on the \textbf{Close Box} domain, where we aim to detect erratic policy failures that result from OOD scaling of the box.
We run several evaluations of our VLM runtime monitor, varying the choice of VLM (GPT-4o, Claude 3.5 Sonnet, Gemini 1.5 Pro~\cite{reid2024gemini}) and prompt template (Video QA, Image QA).
Since erratic failures in this domain are assigned to STAC---which detects 99\% of them---we would like the VLM to avoid raising false alarms so as to keep the overall FPR of Sentinel low when the two detectors are combined.

\begin{table*}[h!]
    \centering
    \caption{\small
        \textbf{Extended results on detecting erratic failures in the Close Box domain.} 
        Our temporal consistency detector, STAC, detects 99\% of erratic failures exhibited by diffusion policies. 
        VLMs raise many false alarms when prompted with just a single image (Image QA), whereas performing Video QA (Ours) leads to a stark increase in TNR across all models.
        Performance also varies across the choice of VLM; GPT-4o is the most reliable in this domain, in contrast to other VLMs that struggle to accurately characterize the robot's task progress in the video. 
        Overall, Sentinel detects 100\% of failures, while combining STAC and the VLM increases false alarms to 13\%.
    } 
    \label{tab:erratic-failures-full}
    \adjustbox{max width=\textwidth}{\begin{tabular}{clcccccccc|cccc}
        \toprule
        & \textbf{Category 1: Erratic Failures} & \multicolumn{3}{c}{\textbf{Close Box: In-Distribution}} & & \multicolumn{3}{c}{\textbf{Close Box: Out-of-Distribution}} & & & \multicolumn{3}{c}{\textbf{Close Box: Combined}} \\
        & & \multicolumn{3}{c}{(Policy Success Rate: 91\%)} & & \multicolumn{3}{c}{(Policy Success Rate: 41\%)} & & & \multicolumn{3}{c}{(Policy Success Rate: 67\%)} \\
        \cmidrule{3-5} \cmidrule{7-9} \cmidrule{12-14}
        & \textbf{Failure Detector} & TPR $\uparrow$ & TNR $\uparrow$ & Det. Time (s) $\downarrow$ & & TPR $\uparrow$ & TNR $\uparrow$ & Det. Time (s) $\downarrow$ & & & TPR $\uparrow$ & TNR $\uparrow$ & Accuracy $\uparrow$ \\
        \midrule
        \multirow{6}{*}{\STAB{\rotatebox[origin=c]{90}{\small \textbf{Diffusion}}}}
            & Temporal Non-Distr. Min. & 1.00 & 0.97 & 5.00 & & 1.00 & 0.27 & 12.35 & & & 1.00 & 0.77 & 0.85 \\
            & Diffusion Recon.~\cite{graham2023denoising} & 0.33 & 0.95 & 13.60 & & 0.40 & 1.00 & 17.08 & & & 0.37 & 0.96 & 0.76 \\
            & Temporal Diffusion Recon. & 1.00 & 0.96 & 8.47 & & 0.92 & 1.00 & 15.75 & & & 0.92 & 0.97 & 0.95 \\
            & DDPM Loss (\cref{eq:ddpm-loss}) & 1.00 & 0.90 & 8.27 & & 1.00 & 0.94 & 14.54 & & & 1.00 & 0.91 & 0.94 \\
            & Temporal DDPM Loss & 1.00 & 0.95 & 7.53 & & 1.00 & 0.37 & 13.66 & & & 1.00 & 0.79 & 0.86 \\
            & Diffusion Output Variance & 0.33 & 0.94 & 14.00 & & 0.28 & 1.00 & 17.27 & & & 0.26 & 0.96 & 0.72 \\
        \midrule
        \multirow{3}{*}{\STAB{\rotatebox[origin=c]{90}{\footnotesize \textbf{Embed.}}}}
            & Policy Encoder & 0.25 & 0.98 & 16.27 & & 1.00 & 0.00 & 1.59 & & & 0.94 & 0.70 & 0.78 \\
            & CLIP Pretrained & 1.00 & 0.95 & 15.73 & & 1.00 & 0.00 & 8.20 & & & 1.00 & 0.68 & 0.79 \\
            & ResNet Pretrained & 1.00 & 0.95 & 17.87 & & 1.00 & 0.00 & 15.51 & & & 1.00 & 0.68 & 0.79 \\
        \midrule
        \multirow{3}{*}{\STAB{\rotatebox[origin=c]{90}{\small \textbf{STAC}}}}
            & STAC For. KL & 1.00 & 0.90 & 6.60 & & 0.99 & 0.85 & 14.04 & & & 0.99 & 0.89 & 0.92 \\
            & STAC Rev. KL & 1.00 & 0.95 & 7.60 & & 0.93 & 0.97 & 15.12 & & & 0.93 & 0.96 & 0.95 \\
            & \gcl{\textbf{STAC MMD*}} & \gcl{1.00} & \gcl{0.94} & \gcl{7.20} & \gcl{} & \gcl{0.99} & \gcl{0.93} & \gcl{14.72} & \gcl{} & \gcl{} & \gcl{0.99} & \gcl{0.94} & \gcl{0.96} \\
        \midrule
        \multirow{6}{*}{\STAB{\rotatebox[origin=c]{90}{\small \textbf{VLM}}}}
            & GPT-4o Image QA & 1.00 & 0.00 & 23.20 & & 1.00 & 0.00 & 23.20 & & & 1.00 & 0.00 & 0.29 \\
            & \gcl{\textbf{GPT-4o Video QA*}} & \gcl{1.00} & \gcl{0.89} & \gcl{21.20} & \gcl{} & \gcl{0.69} & \gcl{0.95} & \gcl{21.02} & \gcl{} & \gcl{} & \gcl{0.77} & \gcl{0.91} & \gcl{0.87} \\
            & Gemini 1.5 Pro Image QA & 1.00 & 0.00 & 21.20 & & 1.00 & 0.00 & 23.20 & & & 1.00 & 0.00 & 0.29 \\
            & Gemini 1.5 Pro Video QA & 1.00 & 0.57 & 17.20 & & 1.00 & 0.50 & 20.20 & & & 1.00 & 0.55 & 0.68 \\
            & Claude 3.5 Sonnet Image QA & 1.00 & 0.06 & 23.20 & & 0.69 & 0.10 & 23.20 & & & 0.77 & 0.07 & 0.27 \\
            & Claude 3.5 Sonnet Video QA & 0.83 & 0.31 & 23.20 & & 0.44 & 0.40 & 23.20 & & & 0.55 & 0.35 & 0.40 \\
        \midrule
        \midrule
        \rowcolor{green!13}
        \multicolumn{2}{l}{\textbf{Sentinel (STAC MMD* + GPT-4o Video QA*)}} & 1.00 & 0.86 & 5.47 & & 1.00 & 0.90 & 14.25 & & & 1.00 & 0.87 & 0.91 \\
        \bottomrule
    \end{tabular}}
\end{table*}

We first discuss the poor performance of the Image QA baseline. 
When prompted with just a single image, we find that all VLMs struggle to distinguish policy successes from failures and thereby do not exceed a TNR of 7\%. 
Without observing the initial state (i.e., the box with its lids open) and the actions of the robot, the VLM is unable to identify the lids of the box and whether they have been closed.
Thus, once the task's time limit is exceeded, the VLM simply declares failure. 
We iterated on several prompts that included detailed questions in attempt to coerce the VLM to reason about the location of the box and its lids, but this yielded negligible changes in performance.
As a result, all the outputs of the Image QA baseline resemble the following example of a false positive: 
 \begin{mdframed}[backgroundcolor=light-gray, roundcorner=10pt, leftmargin=1, rightmargin=1, innerleftmargin=14, innertopmargin=12,innerbottommargin=0, outerlinewidth=1, linecolor=light-gray]
\begin{lstlisting}[linewidth=\columnwidth,breaklines=true]
Analysis: The current observation shows the manipulator's arms positioned near the white box, with the grippers open and not grasping the lids. The two smaller white side lids and the bigger white back lid of the box are not visible, suggesting they are not yet folded. The elapsed time is 30 out of 30 seconds, which means the robot has reached the time limit for completing the task. Given that the lids are not folded and the task is not completed, the robot is clearly failing the task.

Overall assessment: failure
\end{lstlisting}
\end{mdframed}

Prompting the VLMs in a Video QA setup (\cref{appx:vlm}) results in a strict increase in TNR across all models.
Although, depending on the domain, we clearly observe that some VLMs show better visual reasoning performance than others.
On the \textbf{Close Box} domain, GPT-4o generates relatively coherent descriptions of the videos, whereas Claude 3.5 Sonnet and Gemini 1.5 Pro often e.g., confuse closed lids for open lids or fail to recognize that the robot has made any significant progress. 
While this performance discrepancy is difficult to explain, two potential reasons are: a) the \textbf{Close Box} task can require reasoning over long videos (i.e., 20-30 image frames) and, while the VLMs' large context windows are accommodating, the VLMs may still be susceptible to recency bias~\cite{zhao2021calibrate}; b) the images rendered in this domain might be better represented in the training data of one VLM (GPT-4o, in this case) compared to others.

\paragraph{Task Progression Failure Analysis}
In the context of VLM runtime monitoring, task progression failures differ from erratic failures in that they are more visually apparent and hence simpler for the VLM to interpret. 
For example, the policy takes more obviously incorrect actions, e.g., clearly misplacing the cover in the \textbf{Cover Object} domain (see \cref{fig:environments}), but it does so in a temporally consistent manner that goes unnoticed by STAC.
Therefore, under task progression failures, we require the VLM to attain both a high TPR and TNR, whereas we are mainly concerned with TNR under erratic failures.
The extended results of our task progression failure experiments are shown in \cref{tab:task-progression-failures-cover} for the \textbf{Cover Object} domain and \cref{tab:task-progression-failures-close} for the \textbf{Close Box} domain, the two of which are aggregated in \cref{fig:full-system-result}.

\begin{table*}[h!]
    \centering
    \caption{\small
        \textbf{Extended results on detecting task progression failures in the Cover Object domain.}
        STAC only detects 12\% of task progression failures (i.e., when the policy fails in a temporally consistent manner), which highlights the need for VLM runtime monitoring.
        Claude 3.5 Sonnet exhibits the most reliable detection performance in this domain, however, we use a prompt ensembling technique (details in \cref{appx:prompt-ensemble}) to reduce the number of false positives.
        Overall, Sentinel detects 88\% of failures whilst raising 11\% false alarms.
    } 
    \label{tab:task-progression-failures-cover}
    \adjustbox{max width=\textwidth}{\begin{tabular}{clcccccccc|cccc}
        \toprule
        & \textbf{Category 2: Task Progression Failures} & \multicolumn{3}{c}{\textbf{Cover Object: In-Distribution}} & & \multicolumn{3}{c}{\textbf{Cover Object: Out-of-Distribution}} & & & \multicolumn{3}{c}{\textbf{Cover Object: Combined}} \\
        & & \multicolumn{3}{c}{(Policy Success Rate: 98\%)} & & \multicolumn{3}{c}{(Policy Success Rate: 3\%)} & & & \multicolumn{3}{c}{(Policy Success Rate: 56\%)} \\
        \cmidrule{3-5} \cmidrule{7-9} \cmidrule{12-14}
        & \textbf{Failure Detector} & TPR $\uparrow$ & TNR $\uparrow$ & Det. Time (s) $\downarrow$ & & TPR $\uparrow$ & TNR $\uparrow$ & Det. Time (s) $\downarrow$ & & & TPR $\uparrow$ & TNR $\uparrow$ & Accuracy $\uparrow$ \\
        \midrule
        \multirow{2}{*}{\STAB{\rotatebox[origin=c]{90}{\small \textbf{Diff.}}}}
            & Temporal Non-Distr. Min. & 1.00 & 0.93 & 2.40 & & 0.06 & 1.00 & 7.60 & & & 0.09 & 0.93 & 0.56 \\
            & Diffusion Output Variance & 0.00 & 0.77 & - & & 0.00 & 1.00 & - & & & 0.00 & 0.77 & 0.44 \\
        \midrule
        \multirow{3}{*}{\STAB{\rotatebox[origin=c]{90}{\small \textbf{STAC}}}}
            & STAC For. KL & 1.00 & 0.95 & 2.40 & & 0.03 & 1.00 & 6.40 & & & 0.06 & 0.95 & 0.56 \\
            & STAC Rev. KL & 1.00 & 0.93 & 2.40 & & 0.03 & 1.00 & 6.40 & & & 0.06 & 0.93 & 0.55 \\
            & \gcl{\textbf{STAC MMD*}} & \gcl{1.00} & \gcl{0.93} & \gcl{2.40} & \gcl{} & \gcl{0.09} & \gcl{1.00} & \gcl{7.73} & \gcl{} & \gcl{} & \gcl{0.12} & \gcl{0.93} & \gcl{0.58} \\
        \midrule
        \multirow{9}{*}{\STAB{\rotatebox[origin=c]{90}{\small \textbf{VLM}}}}            
            & GPT-4o Image QA & 1.00 & 0.07 & 12.00 & & 1.00 & 0.00 & 11.03 & & & 1.00 & 0.07 & 0.47 \\
            & GPT-4o Video QA & 1.00 & 0.05 & 5.60 & & 1.00 & 0.00 & 10.06 & & & 1.00 & 0.05 & 0.46 \\
            & Gemini 1.5 Pro Image QA & 1.00 & 0.05 & 12.00 & & 0.91 & 0.00 & 11.15 & & & 0.91 & 0.05 & 0.42 \\
            & Gemini 1.5 Pro Video QA & 1.00 & 0.00 & 5.60 & & 1.00 & 0.00 & 7.54 & & & 1.00 & 0.00 & 0.44 \\
            & Claude 3.5 Sonnet Image QA & 1.00 & 0.81 & 12.00 & & 0.70 & 1.00 & 12.00 & & & 0.71 & 0.82 & 0.77 \\
            & Claude 3.5 Sonnet Video QA & 1.00 & 0.84 & 12.00 & & 0.79 & 1.00 & 12.00 & & & 0.79 & 0.84 & 0.82 \\
            & Claude 3.5 Sonnet Video QA + Success Video & 1.00 & 0.77 & 12.00 & & 0.94 & 1.00 & 11.59 & & & 0.94 & 0.77 & 0.85 \\
            & Claude 3.5 Sonnet Video QA + Goal Images & 1.00 & 0.93 & 5.60 & & 0.76 & 1.00 & 11.74 & & & 0.76 & 0.93 & 0.86 \\
            & \gcl{\textbf{Claude 3.5 Sonnet Prompt Ensemble* (see \cref{appx:prompt-ensemble})}} & \gcl{1.00} & \gcl{0.93} & \gcl{12.00} & \gcl{} & \gcl{0.85} & \gcl{1.00} & \gcl{12.00} & \gcl{} & \gcl{} & \gcl{0.85} & \gcl{0.93} & \gcl{0.90} \\
        \midrule
        \midrule
        \multicolumn{2}{l}{Sentinel (STAC MMD* + Claude 3.5 Sonnet Video QA)} & 1.00 & 0.79 & 2.40 & & 0.82 & 1.00 & 8.68 & & & 0.82 & 0.80 & 0.81 \\
        \rowcolor{green!13}
        \multicolumn{2}{l}{\textbf{Sentinel (STAC MMD* + Claude 3.5 Sonnet Prompt Ensemble*)}} & 1.00 & 0.88 & 2.40 & & 0.88 & 1.00 & 8.69 & & & 0.88 & 0.89 & 0.88 \\
        \bottomrule
    \end{tabular}}
\end{table*}

Among the VLMs considered, we find that Claude 3.5 Sonnet exhibits the best performance in the \textbf{Cover Object} domain (\cref{tab:task-progression-failures-cover}), achieving a 79\% TPR and an 84\% TNR when prompted in a Video QA setup.
Qualitative analysis of the responses from GPT-4o and Gemini 1.5 Pro reveals that they misinterpret the videos and thus raise an excessive number of false alarms.
As expected, STAC achieves an appreciable 93\% TNR, but only detects 12\% of task progression failures.
The combination of STAC and Claude 3.5 Sonnet Video QA performs amicably (82\% TPR, 80\% TNR) but can be improved in terms of reducing the number of false positives, the majority of which are raised by the VLM. 
Here, we show that the prompt ensembling strategy discussed in \cref{appx:prompt-ensemble} strengthens the reliability of our VLM runtime monitor (``Claude 3.5 Sonnet Prompt Ensemble''), increasing the TNR to 93\%.
Finally, the combination of this improved VLM runtime monitor with STAC results in a version of Sentinel that detects 88\% of failures whilst raising a more acceptable number of false alarms (11\%).

\begin{table*}[h!]
    \centering
    \caption{\small
        \textbf{Extended results on detecting task progression failures in the Close Box domain.}
        Here, STAC detects considerably more task progression failures than in the \textbf{Cover Object} domain, yet 35\% of failures are left undetected. 
        As in \cref{tab:erratic-failures-full}, we find that video-based reasoning is necessary for VLMs to attain high TNRs, with GPT-4o showing the best performance. 
        Overall, Sentinel detects 96\% of failures whilst raising 14\% false alarms.
    }
    \label{tab:task-progression-failures-close}
    \adjustbox{max width=\textwidth}{\begin{tabular}{clcccccccc|cccc}
        \toprule
        & \textbf{Category 2: Task Progression Failures} & \multicolumn{3}{c}{\textbf{Close Box: In-Distribution}} & & \multicolumn{3}{c}{\textbf{Close Box: Out-of-Distribution}} & & & \multicolumn{3}{c}{\textbf{Close Box: Combined}} \\
        & & \multicolumn{3}{c}{(Policy Success Rate: 85\%)} & & \multicolumn{3}{c}{(Policy Success Rate: 0\%)} & & & \multicolumn{3}{c}{(Policy Success Rate: 40\%)} \\
        \cmidrule{3-5} \cmidrule{7-9} \cmidrule{12-14}
        & \textbf{Failure Detector} & TPR $\uparrow$ & TNR $\uparrow$ & Det. Time (s) $\downarrow$ & & TPR $\uparrow$ & TNR $\uparrow$ & Det. Time (s) $\downarrow$ & & & TPR $\uparrow$ & TNR $\uparrow$ & Accuracy $\uparrow$ \\
        \midrule
        \multirow{2}{*}{\STAB{\rotatebox[origin=c]{90}{\small \textbf{Diff.}}}}
            & Temporal Non-Distr. Min. & 1.00 & 0.97 & 4.67 & & 0.67 & - & 7.46 & & & 0.71 & 0.97 & 0.82 \\
            & Diffusion Output Variance & 0.00 & 1.00 & - & & 0.28 & - & 11.57 & & & 0.25 & 1.00 & 0.55 \\
        \midrule
        \multirow{3}{*}{\STAB{\rotatebox[origin=c]{90}{\small \textbf{STAC}}}}
            & STAC For. KL & 1.00 & 0.97 & 5.07 & & 0.61 & - & 8.14 & & & 0.65 & 0.97 & 0.78 \\
            & STAC Rev. KL & 1.00 & 0.97 & 6.13 & & 0.61 & - & 10.11 & & & 0.65 & 0.97 & 0.78 \\
            & \gcl{\textbf{STAC MMD*}} & \gcl{1.00} & \gcl{0.97} & \gcl{5.47} & \gcl{} & \gcl{0.61} & \gcl{-} & \gcl{9.06} & \gcl{} & \gcl{} & \gcl{0.65} & \gcl{0.97} & \gcl{0.78} \\
        \midrule
        \multirow{6}{*}{\STAB{\rotatebox[origin=c]{90}{\small \textbf{VLM}}}}            
            & GPT-4o Image QA & 1.00 & 0.00 & 23.20 & & 1.00 & - & 22.68 & & & 1.00 & 0.00 & 0.60 \\
            & \gcl{\textbf{GPT-4o Video QA*}} & \gcl{1.00} & \gcl{0.89} & \gcl{21.20} & \gcl{} & \gcl{0.87} & \gcl{-} & \gcl{22.00} & \gcl{} & \gcl{} & \gcl{0.88} & \gcl{0.89} & \gcl{0.89} \\
            & Gemini 1.5 Pro Image QA & 1.00 & 0.00 & 21.20 & & 0.96 & - & 23.20 & & & 0.96 & 0.00 & 0.57 \\
            & Gemini 1.5 Pro Video QA & 1.00 & 0.57 & 17.20 & & 0.98 & - & 15.47 & & & 0.98 & 0.57 & 0.82 \\
            & Claude 3.5 Sonnet Image QA & 1.00 & 0.06 & 23.20 & & 0.78 & - & 23.20 & & & 0.81 & 0.06 & 0.51 \\
            & Claude 3.5 Sonnet Video QA & 0.83 & 0.31 & 23.20 & & 0.80 & - & 23.20 & & & 0.81 & 0.31 & 0.61 \\
        \midrule
        \midrule
        \rowcolor{green!13}
        \multicolumn{2}{l}{\textbf{Sentinel (STAC MMD* + GPT-4o Video QA*)}} & 1.00 & 0.86 & 5.47 & & 0.96 & - & 12.20 & & & 0.96 & 0.86 & 0.92 \\
        \bottomrule
    \end{tabular}}
\end{table*}

The results in \cref{tab:task-progression-failures-close} reaffirm the following \textbf{key takeaways}: a) image-based VLM reasoning is insufficient for understanding the robot's task progress, thus resulting in low TNRs; b) the VLMs' performances vary across domains, with GPT-4o and Claude 3.5 Sonnet performing the best in the \textbf{Close Box} and \textbf{Cover Object} domains, respectively; c) STAC and the VLM runtime monitor play complementary roles toward a performant overall failure detector across domains.
We expect the performance of our VLM runtime monitor (and thus Sentinel) to improve with the future release of more capable VLMs~\cite{chen2024spatialvlm}, which may also eliminate the discrepancies among VLMs noted above.

\textbf{Discussion: Why combine failure detectors by taking the union of their predictions?}

Our full failure detector, Sentinel, combines STAC and the VLM by taking the union of their predictions (i.e., the ``Logical OR'' in \cref{fig:full-system}), which, in the worst case, compounds their false positive rates by applying the union bound.
However, doing so follows from several design considerations:
\begin{itemize}
    \item \textbf{Importance weighting:} We explicitly define one failure category as the complement of the other and assign a specialized detector to each because it is extremely difficult to design a single detector that captures highly heterogeneous failure modes. Thus, by using the ``OR'' operation, we are placing \textit{equal importance} on the two proposed failure categories.
    We note that our failure detector's primary purpose is to detect unseen failures at deployment time: i.e., we do not assume any data of robot failures to calibrate the detector, which may be necessary to tune importance weights for different detectors.
    Furthermore, importance weights that are optimal on one dataset may perform poorly on failure modes not represented in that data.
    \item \textbf{Performance interpretability:} Using a simple scheme to combine detectors makes it easy to interpret a) the runtime behavior of the combined detector and b) which individual detectors are contributing to performance and when.
    For example, using the logical ``OR'' implies that Sentinel's overall FPR remains acceptably low if both STAC and the VLM runtime monitor have low FPRs.
    STAC achieves a provably low FPR (\cref{appx:derivations}), while strategies exist to reduce the VLM's FPR through e.g., prompt ensembling (\cref{appx:prompt-ensemble}) or conformal calibration~\cite{knowno2023}.
    Thus, we can expect a low FPR when the detectors are combined.
    We can similarly interpret Sentinel's TPR performance, as exemplified in our experimental analysis.
    Ease of performance interpretation may not hold true with more sophisticated schemes for combining detectors.
    
    \item \textbf{Runtime constraints:} 
    In practice, STAC (fast) and the VLM runtime monitor (slow) come with different inference-time latencies and may need to run at distinct timescales.
    Thus, our logical ``OR'' combination only applies at overlapping timesteps and otherwise allows each failure detector to flag independently, i.e., without synchronizing their detection rates.
    This flexibility is crucial, as more sophisticated combination schemes could encounter issues if e.g., unexpected network latencies result in delayed responses from a cloud-hosted VLM.
\end{itemize}
Nevertheless, we note that more sophisticated combination schemes might offer advantages, such as improved detection performance or scalability when integrating additional detectors. 
Exploring such schemes that align with the above design considerations represents a valuable direction for future work.

\section{Derivations}\label{appx:derivations}
To validate our design choices, we show that STAC's score function and calibration procedure in \cref{sec:temporal-consistency} provably result in a low FPR. To do so, we apply recently popularized tools from conformal prediction because they are sample efficient and distribution free, meaning that they do not require distributional assumptions on the trajectory rollouts. Our guarantee is a direct application of the standard results in split conformal prediction \cite{angelopoulos2021gentle}, but to ensure the self-containedness of this manuscript, we first briefly reintroduce the core concepts in conformal prediction (taken from \cite{angelopoulos2021gentle}) using the notation in our paper.

\paragraph{Background on Conformal Inference} In its most basic form, conformal prediction aims to construct a prediction set $\calC$ that will contain the true value of a new test point $X_{\mathrm{test}}$ with a user defined probability of at least $1 - \delta$ \cite{angelopoulos2021gentle}. To do so, a conformal algorithm requires 1) a sequence of calibration samples $\{X^i\}_{i=1}^M$ with all samples $X^1, \dots, X^M, X_{\mathrm{test}}$ i.i.d. and 2) a conformity score function $\eta(X) \in \R$. Intuitively, conformal methods use $\{\eta(X^i)\}_{i=1}^M$ to identify how likely $\eta(X_{\mathrm{test}})$ is to lie within the range of a $1-\delta$ fraction of the calibration samples (i.e., how well $X_{\mathrm{test}}$ conforms to the calibration data). We emphasize that this approach ensures that we construct a valid prediction set $\calC$, regardless of the choice of conformity score and without knowing any properties of the data generating distribution:

\begin{theorem}[Adapted from Thm. D.1 in \cite{angelopoulos2021gentle}]\label{thm:conformal}
    Let $\calD_{\mathrm{calib}} = \{X^1, \dots, X^M\}$ be a calibration dataset and let $X_{\mathrm{test}}$ be a test sample. Suppose that the samples in $\calD_{\mathrm{calib}}$ and $X_{\mathrm{test}}$ are independent and identically distributed (i.i.d.). Then, defining 
    \begin{equation*}
        \gamma := \inf \bigg\{ \xi \in \R \ : \ \frac{|\{i : \eta(X^i) \leq \xi\}|}{M} \geq \frac{\lceil (M + 1)(1 - \delta)\rceil}{M}\bigg\}
    \end{equation*}
    as the $\frac{\lceil (M + 1)(1 - \delta)\rceil}{M}$ empirical quantile of the calibration data ensures that
    \begin{equation*}
        \prob \big(\eta(X_{\mathrm{test}}) \leq \gamma\big) \geq 1 - \delta.
    \end{equation*} 
    Here, $\lceil\cdot\rceil$ denotes the ceiling function.
\end{theorem}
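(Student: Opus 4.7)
The plan is to prove the coverage guarantee via an exchangeability argument applied to the conformity scores, which is the classical route to this result. First, I would set $S_i := \eta(X^i)$ for $i=1,\dots,M$ and $S_{\mathrm{test}} := \eta(X_{\mathrm{test}})$. Because $\eta$ is a deterministic function and the $X^i$ are i.i.d. with $X_{\mathrm{test}}$, the tuple $(S_1,\dots,S_M,S_{\mathrm{test}})$ is also i.i.d. and hence exchangeable: its joint law is invariant under any permutation of its $M+1$ coordinates.

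Second, I would re-express $\gamma$ in terms of the order statistics of the calibration scores. Denote them by $S_{(1)} \leq S_{(2)} \leq \dots \leq S_{(M)}$ and write $q := \lceil (M+1)(1-\delta) \rceil$. If $q \leq M$, the infimum defining $\gamma$ is attained at $S_{(q)}$, giving $\gamma = S_{(q)}$; if $q = M+1$, the set inside the infimum is empty, so by convention $\gamma = +\infty$ and the claim is trivial. In the nontrivial case it therefore suffices to establish
\begin{equation*}
    \prob\bigl(S_{\mathrm{test}} \leq S_{(q)}\bigr) \geq 1-\delta.
\end{equation*}
The event $\{S_{\mathrm{test}} \leq S_{(q)}\}$ is equivalent to the rank of $S_{\mathrm{test}}$ within the combined sample $(S_1,\dots,S_M,S_{\mathrm{test}})$ being at most $q$, under a suitable tie-breaking rule.

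Third, I would invoke the key symmetry lemma: for exchangeable random variables, the rank of any designated element in the combined sample is uniformly distributed on $\{1,\dots,M+1\}$ when scores are almost surely distinct. To cover the general case, I would attach i.i.d.\ auxiliary uniform random variables $U_i,U_{\mathrm{test}}$ independent of everything else and order the pairs $(S_i,U_i)$ lexicographically; exchangeability of the augmented tuple is preserved, and the refined ranks are now almost surely distinct. Under this randomized tie-breaking, the rank of $(S_{\mathrm{test}},U_{\mathrm{test}})$ is uniform on $\{1,\dots,M+1\}$, giving probability exactly $q/(M+1)$ that it is at most $q$. Moreover, a randomized rank $\leq q$ implies $S_{\mathrm{test}} \leq S_{(q)}$, so
\begin{equation*}
    \prob\bigl(S_{\mathrm{test}} \leq S_{(q)}\bigr) \geq \frac{q}{M+1} = \frac{\lceil (M+1)(1-\delta)\rceil}{M+1} \geq 1-\delta,
\end{equation*}
completing the proof.

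The main obstacle I anticipate is the careful handling of ties, since the theorem imposes no continuity assumption on the distribution of $\eta(X)$ and the coverage event uses a non-strict inequality. The randomized tie-breaking step above is the cleanest workaround, but it requires arguing that (i) augmenting the scores with independent uniforms preserves exchangeability and (ii) the coverage event under the randomized rank is contained in the event $\{S_{\mathrm{test}} \leq S_{(q)}\}$. Both are short but need to be stated precisely; once they are in place, the exchangeability-induced uniformity of the rank and the ceiling in $q$ combine to yield the stated lower bound with no slack beyond what is needed to absorb the ties.
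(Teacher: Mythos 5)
Your proof is correct and follows the standard exchangeability-and-rank argument for split conformal prediction. The paper itself does not prove \cref{thm:conformal} --- it imports the result by citation from \cite{angelopoulos2021gentle} --- and your argument is essentially the canonical proof given in that reference: identify $\gamma$ with the order statistic $S_{(q)}$ for $q = \lceil (M+1)(1-\delta)\rceil$ (handling the vacuous case $q = M+1$), use exchangeability to show the (tie-broken) rank of the test score is uniform on $\{1,\dots,M+1\}$, and conclude coverage at least $q/(M+1) \geq 1-\delta$. Your treatment of ties via auxiliary uniforms and the containment of the randomized-rank event in $\{S_{\mathrm{test}} \leq S_{(q)}\}$ are both handled correctly, so no gaps remain.
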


\paragraph{Conformal guarantee of STAC} The base split conformal procedure outlined by \cref{thm:conformal} requires that the samples used for calibration and test are i.i.d. This is not the case for states and actions observed sequentially within a trajectory, complicating the analysis of applying STAC at each timestep within a trajectory. 
Thus, to resolve this issue and provide a guarantee when we sequentially apply STAC on the correlated state-action pairs within a trajectory, we calibrate the detector using the consistency scores generated across full trajectories in \cref{sec:temporal-consistency}. This allows us to rigorously bound the FPR using \cref{thm:conformal}.
\begin{proposition}[STAC has low FPR]\label{prop:stac}
    Let $\calD_\tau = \{\tau^i\}_{i=1}^M \iid P_\tau$ be the validation dataset of successful trajectories, each consisting of $H_i \leq H$ timesteps and drawn i.i.d. from the closed-loop nominal distribution $P_\tau$. For notational simplicity, assume that any trajectory has a length divisible by $k$ (i.e., $H_i \bmod k = 0$). Moreover, let $\eta_t$ be defined as the STAC temporal consistency score at some timestep $0 \leq t \leq H$ in \cref{eq:cum-score-fn} and set $\gamma$ equal to the empirical $\frac{\lceil (M + 1)(1 - \delta)\rceil}{M}$ quantile of the terminal STAC scores $\{\eta_{H_i}^i\}_{i=1}^M$ of the trajectories in $\calD_\tau$. 
    Then, the \emph{false positive rate}---that is, the probability that we raise a false alarm at any point during a new successful test trajectory $\tau \sim P_\tau$ of length $H' \leq H$---is at most $\delta$:
    \begin{equation}\label{eq:stac-fpr}
        \mathrm{FPR} := \prob_{P_\tau} \big(\exists \ 0 \leq t \leq H' \ \mathrm{s.t.} \ \eta_t > \gamma \big) \leq \delta.
    \end{equation}
\end{proposition}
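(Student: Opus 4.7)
The plan is to exploit monotonicity of $\eta_t$ in $t$ to reduce the anytime false-alarm event to a terminal event, and then directly apply the split conformal guarantee (Theorem~\ref{thm:conformal}) with a trajectory-level conformity score. The key observation is that, by Eq.~\ref{eq:cum-score-fn}, $\eta_t$ is a cumulative sum of non-negative statistical distances $\hat{D}(\bar\pi_{ik},\tilde\pi_{(i+1)k}) \geq 0$. Consequently, the sequence $(\eta_t)_{0 \leq t \leq H'}$ is monotonically non-decreasing, so $\{\exists \ 0 \leq t \leq H' : \eta_t > \gamma\}$ coincides exactly with the terminal event $\{\eta_{H'} > \gamma\}$. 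This collapses an anytime bound into a single-hypothesis bound.

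Given this reduction, I would invoke Theorem~\ref{thm:conformal} with the conformity score $\eta(\tau^i) := \eta_{H_i}^i$ that maps each trajectory to its terminal STAC score. This is a deterministic function of a single trajectory (its states, sampled action distributions, and length), so the assumption $\tau^1, \ldots, \tau^M, \tau \iid P_\tau$ implies that the terminal scores $\eta_{H_1}^1, \ldots, \eta_{H_M}^M, \eta_{H'}$ are i.i.d., which is the only hypothesis Theorem~\ref{thm:conformal} requires. Since $\gamma$ is defined to be precisely the $\lceil (M+1)(1-\delta)\rceil / M$ empirical quantile of $\{\eta_{H_i}^i\}_{i=1}^M$, Theorem~\ref{thm:conformal} yields $\prob_{P_\tau}(\eta_{H'} \leq \gamma) \geq 1 - \delta$, i.e., $\prob_{P_\tau}(\eta_{H'} > \gamma) \leq \delta$. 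Combining this with the monotonicity-based equivalence of events gives \cref{eq:stac-fpr}.

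There is no genuinely hard step: the heavy lifting is done by Theorem~\ref{thm:conformal}, and the divisibility assumption $H_i \bmod k = 0$ removes the minor bookkeeping around the last partial chunk. The point worth emphasizing is why the cumulative formulation of $\eta_t$ matters here: if instead one thresholded the per-timestep distances $\hat{D}$ individually, controlling the FPR uniformly over the rollout would require either a union bound over the $H/k$ decision points (losing a factor of $H/k$ in the guarantee) or a time-uniform martingale argument, neither of which is needed once the score is monotone in $t$. Hence the design choice in \cref{sec:temporal-consistency} is not only empirically justified (\cref{fig:error-result}) but is exactly what makes the clean conformal guarantee possible.
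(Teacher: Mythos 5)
Your proposal is correct and follows essentially the same two-step argument as the paper's proof: apply the split conformal guarantee (Theorem~\ref{thm:conformal}) to the i.i.d.\ terminal scores to bound $\prob(\eta_{H'} > \gamma) \leq \delta$, and use monotonicity of the cumulative score (from $\hat{D} \geq 0$) to equate the anytime exceedance event with the terminal one. The only difference is presentational ordering, plus your added (accurate) remark on why the cumulative design avoids a union bound.
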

\begin{proof}
    Let $H' \leq H$ be the length of the test trajectory $\tau$. If there is no distribution shift, i.e., when the test trajectory $\tau$ is i.i.d. with respect to $\calD_\tau \iid P_\tau$, it holds that $\eta_{H'}$ and $\{\eta_{H_i}^i\}_{i=1}^M$ are i.i.d. Therefore, by \cref{thm:conformal}, we have that 
    \begin{equation*}
        \prob_{P_\tau}\big(\eta_{H'} > \gamma \big) \leq \delta.
    \end{equation*}
    Moreover, since we define $\eta_t = \sum_{i=0}^{j-1} \hat{D}(\bar{\pi}_{ik}, \  \tilde{\pi}_{(i+1)k})$ for $t = jk$ in \cref{eq:cum-score-fn} and since $\hat{D}(\cdot, \cdot) \geq 0$ because it is a statistical distance, it follows that $\eta_t$ is increasing. That is, $\eta_{0} \leq \eta_{k} \leq \eta_{2k} \leq \dots \leq \eta_{H'}$. Therefore, if $\eta_t$ crosses the threshold $\gamma$ at any time, it also holds that $\eta_{H'} > \gamma$. This immediately implies the proposition, as we then have that 
    \begin{equation*}
        \prob_{P_\tau} \big(\exists \ 0 \leq t \leq {H'} \ \mathrm{s.t.} \ \eta_t > \gamma \big) = \prob_{P_\tau}\big(\eta_{H'} > \gamma \big) \leq \delta.
    \end{equation*}
\end{proof}

We conclude this section with three remarks:
\begin{enumerate}
    \item We only bound the FPR, which ensures that our algorithm does not raise a false alarm with high probability, so that any warnings likely correspond to an OOD scenario. We do so because a system that frequently raises false alarms is impractical to use. Our calibration approach does not guarantee the detection of failures, nor does it guarantee that we do not issue false alarms on OOD successes, as this is not possible without any distributional assumptions on the OOD scenarios or without using failure data for calibration \cite{luo2024online}. Instead, we empirically find that our temporal consistency score performs amicably at detecting failures in our experiments. 
    \item \cref{prop:stac} only certifies that the FPR of STAC is low. We make no claims on combined performance of STAC and the VLM, as VLM represents a black-box classifier.
    Future work could investigate methodologies to jointly calibrate an ensemble of failure detectors. 
    \item Conformal guarantees, like those in \cref{thm:conformal} and \cref{prop:stac}, are \emph{marginal} with respect to the calibration data. That is, they may not hold exactly when given a particular calibration dataset, but if we were to sample thousands of calibration datasets, the guarantees would hold on average. Thus, as expected, STAC does not exactly satisfy \cref{eq:stac-fpr} in our results, as compute budgets restricted our experiments to repetitions on a limited number of random seeds.
\end{enumerate}

\end{appendices}

\end{document}